\newtheorem{theorem}{Theorem}[section]
\newtheorem{lemma}[theorem]{Lemma}
\newcommand{\alg}{SFDDM\xspace}
\newif\ifshowcomments
\newcommand{\mynote}[2]{\fbox{\bfseries\sffamily\scriptsize{#1}}
{\small$\blacktriangleright$\textsf{\emph{#2}}$\blacktriangleleft$}}
\newcommand{\mynote}[2]{}
\definecolor{amber}{rgb}{1.0, 0.49, 0.0}
\title{SFDDM: Single-fold Distillation for Diffusion models}
\author{%
  Chi Hong \\
  Delft University of Technology \\
  Delft, Netherlands \\
  \texttt{c.hong@tudelft.nl} \\
  \And
  Jiyue Huang \\
  Delft University of Technology \\
  Delft, Netherlands \\
  \texttt{j.huang-4@tudelft.nl} \\
  \AND
  Robert Birke \\
  University of Torino\\
  Turin, Italy\\
  \texttt{robert.birke@unito.it} \\
  \And
  Dick Epema \\
  Delft University of Technology \\
  Delft, Netherlands \\
  \texttt{D.H.J.Epema@tudelft.nl} \\
  \And
  Stefanie Roos \\
  RPTU Kaiserslautern \\
  Kaiserslautern, Germany \\
  \texttt{stefanie.roos@cs.rptu.de} \\
  \And
  Lydia Y. Chen \\
  University of Neuchatel\\ 
  Neuchatel, Switzerland \\
  \texttt{lydiaychen@ieee.org} \\
}
\begin{document}

\maketitle

\begin{abstract}
  While diffusion models effectively generate remarkable synthetic images, a key limitation is the inference inefficiency, requiring numerous sampling steps. To accelerate inference and maintain high-quality synthesis, teacher-student distillation is applied to compress the diffusion models in a progressive and binary manner by retraining, e.g., reducing the 1024-step model to a 128-step model in 3 folds. In this paper, we propose a single-fold distillation algorithm, \alg, which can flexibly compress the teacher diffusion model into a student model of any desired step, based on reparameterization of the intermediate inputs from the teacher model. To train the student diffusion, we minimize not only the output distance but also the distribution of the hidden variables between the teacher and student model. Extensive experiments on four datasets demonstrate that our student model trained by the proposed \alg is able to sample high-quality data with steps reduced to as little as approximately 1\%, thus, trading off inference time. Our remarkable performance highlights that \alg effectively transfers knowledge in single-fold distillation, achieving semantic consistency and meaningful image interpolation.
\end{abstract}

\section{Introduction}

\label{sec:introduction}


Diffusion models~\cite{song2020score, ho2022cascaded, ddpm:ho2020denoising} have emerged as generative models for images of exceptionally high quality without the necessity of conducting adversarial training.
A diffusion model constitutes a Markov chain of forward steps of slowly adding random noise to data, followed by a reverse denoising process that gradually reconstructs the data from the noise via trained neural networks.
These underlying networks typically use the UNet architecture to better connect the forward steps to the corresponding denoising step. However, 
such models require large numbers of sampling steps, e.g., 1000 in DDPM~\cite{ddpm:ho2020denoising}, which leads to high sampling/inference\footnote{We interchangly use sampling and inference time.} times, limiting their applications in latency-sensitive applications. 

Prior art explored diverse directions to reduce the sampling time of diffusion models and maintain the image synthesis quality. One approach is to reduce the computing complexity by compressing the UNet~\cite{li2023snapfusion, zhao2023mobilediffusion} and leverage the acceleration technologies of modern GPUs. Another approach is to reduce the number of sampling steps, i.e., the required UNet inferences.
\cite{ddim:conf/iclr/SongME21} skips intermediate steps by generalizing the original Markovian process via a class of non-Markovian diffusion steps. These two approaches do not change the original training procedure.
Differently, progressive distillation~\cite{progress:salimans2022progressive,meng2023distillation} introduces a teacher-student framework to reduce a trained teacher diffusion model of $T$ steps into a student diffusion of $T'$ steps, where $T' \ll T$, via multiple binary foldings by retraining. For instance, distilling a 1024-step diffusion model into a 128-step model needs first to train a 512-step intermediate model, then a 256-step, to finally arrive to the 128-step model. The distillation objective is to minimize the output differences between the teacher and student models. Progressive distillation better maintains the synthesis quality than step-skipping, but it incurs high distillation time and 
must comply with specific values of 
$T$ and $T'$ due to progressive halvings.


Our objective is to design an effective distillation method that achieves high quality in (any) small sampling step and concurrently incurs low distillation time.
We propose \alg, a single-fold distillation framework able to reduce a $T$-step teacher diffusion model into a $T'$-step student diffusion model in a single fold. 
Thanks to its single-fold nature, \alg offers the flexibility to distill the teacher model into a student with any number of steps. To such an end, we first define a new student model, which can extract knowledge of the teacher diffusion model, by an arbitrary steps sub-sequence. Our forward process definition solves the challenge of aligning the variables of teacher and student models, enabling flexible single-fold distillation. Secondly, when training the reverse denoise process of the student model, we minimize not only the difference in the model outputs but also in the hidden variables at each 
step to better preserve the image synthesis quality.

To demonstrate the effectiveness, we evaluate \alg against sampling-skip~\cite{ddim:conf/iclr/SongME21} and progressive distillation~\cite{progress:salimans2022progressive} on CIFAR-10, CelebA, LSUN-Church, and LSUN- Bedroom~datasets. We compare their synthesis quality in terms of Fr\'{e}chet Inception Distance (FID)~\cite{fid:conf/nips/HeuselRUNH17} of distilling a 1024-step diffusion model into 128-step and 16-step models. \alg achieves the lowest FID as well as the best perceptual quality, also with flexibility on the numerical relation between the teacher and the student, i.e., works for both 1024 to 128 or 100 steps. Further, the distillation effectiveness is validated on semantic input-output consistency and image interpolation.

We summarize the contributions of this paper as follows: 

\begin{itemize}
    \item 
    We propose a novel single-fold distillation algorithm, \alg, which can agilely  compress teacher diffusion into a student diffusion model of any step in one fold.
    \item We define a new forward process for student diffusion, which aligns and approximates student and teacher Markovian variables, enabling flexible single-fold distillation.
    \item We design effective training for student diffusion by minimizing the difference of output and hidden variables with respect to the teacher diffusion.  
    \item We experimentally demonstrate superiority of \alg in achieving high-quality sampling data by as little as approximately 1\% number of steps.
\end{itemize}

\section{Related studies} 

\label{sec:preliminary}


Diffusion models 
first step-wise destroy in a forward process the training data structure and then learn how to restore the data structure from noise in a reverse process. 
DDPM~\cite{ddpm:ho2020denoising} proposed the first stable and effective implementation capable of high-quality image synthesis. 
However, diffusion models, including DDPM, suffer from slow inference stemming from immense intermediate hidden variables, each the size of the synthetic output, as well as the complex architecture. This sparked research on how to accelerate data synthesis with related work exploring three main directions.

{\bf Fast sampling}. 
Diffusion models mostly rely on an UNet~\cite{unet} architecture combining cross-attention and ResNet blocks for denoising.
Fast sampling
~\cite{li2023snapfusion, zhao2023mobilediffusion} facilitates the reverse process by optimizing the computations of UNets. 
~\cite{li2023snapfusion}~proposes 
an efficient UNet by identifying the redundancy of the original model and reducing the computation, while \cite{zhao2023mobilediffusion} further comprehensively analyzes and simplifies each component. These optimizations are orthogonal to other acceleration techniques.


{\bf Sampling step skipping}. DDIM~\cite{ddim:conf/iclr/SongME21} focuses on generalizing the Markovian diffusion of DDPM via a family of non-Markovian processes. These are deterministic and thus faster. Accordingly, it is able to reduce the required number of sampling steps on the trained DDPM model, without necessity of retraining. A noticeable limitation is that DDIM trades off the quality of generated data. 
as DDIM sampling approximates the procedure of the original model with skipped intermediate steps.

{\bf Knowledge distillation}. Previously explored for GANs~\cite{gandistil:conf/cvpr/Cazenavette00EZ23,gandistil:conf/icml/LiuLBL23}, distillation~\cite{distillation:conf/icml/AsadiDMAB23,distillation:conf/icml/CaoLHRWG23} allows to transfer knowledge from a large trained teacher model to a smaller student model for faster inference. To train a student model, progressive distillation~\cite{progress:salimans2022progressive} halves repeatedly the steps of a teacher model until the desired number of steps has been reached. 
~\cite{meng2023distillation} further extends the folding optimization to classifier-free guided diffusion implementation of Text-to-Image tasks. Although progressive distillation delivers increasingly efficient inference, each halving requires to train a new student model which multiples the training effort and impacts the output quality due to added approximation noise at each folding. Consistency models are proposed to improve the sample quality with few steps~\cite{DBLP:conf/icml/SongD0S23}. A consistency model can be directly trained or obtained by distilling a trained teacher model.

\section{Single-fold distillation}


We rethink knowledge distillation of 
diffusion models to reduce the number of sampling steps by proposing single-fold distillation. Instead of progressive multiple folds~\cite{progress:salimans2022progressive}, which introduces distortion and costs extra training effort at each fold, \alg directly distills the teacher model to a student model with a given number of steps in a single fold. One crucial challenge is the alignment from the teacher's to the student's hidden variables, as the Markov chain is defined by every two consecutive variables but the student has much fewer steps.

We first introduce the preliminaries of a DDPM model used as a teacher model, including the definition of the forward/reverse process and the training objective. Then we present \alg which defines the forward and reverse process of the student by aligning and matching the hidden variables of the teacher. Finally, we design the distillation algorithm for deployment, which shows the training procedure of the student accordingly.
For ease of presentation, we set the number of steps in the student model as a divisor of the number of steps in the teacher model, but the method is valid for an arbitrary number of student steps, i.e. fractional teacher/student step ratios.


\subsection{Preliminary}
\label{subsec:background}
DDPM is composed of a forward (noising) and a reverse (denoising) process, through $T$ steps. Its objective is to learn the  denoising process via a given forward process. 

\textbf{Reverse process:} The optimization objective of diffusion models~\cite{sohl2015deep} is derived by variational inference. Given the observed data $\boldsymbol{x}_0$, the diffusion model is a probabilistic model which specifies the joint distribution $p_\theta(\boldsymbol{x}_{0: T})$, where $\boldsymbol{x}_1, ..., \boldsymbol{x}_T$ are latent variables with the same dimensions as $\boldsymbol{x}_0$, and $\theta$ are learnable model parameters. $p_\theta(\boldsymbol{x}_{0: T})$ is 
a Markov chain that samples from $\boldsymbol{x}_T$ to $\boldsymbol{x}_0$,
$p_\theta\left(\boldsymbol{x}_{0: T}\right):=p_{\theta}\left(\boldsymbol{x}_T\right) \prod_{t=1}^T p_\theta\left(\boldsymbol{x}_{t-1} \mid \boldsymbol{x}_t\right).$
DDPM 
assumes that $p_{\theta}\left(\boldsymbol{x}_T\right)=\mathcal{N}\left(\boldsymbol{x}_T ; \mathbf{0}, \boldsymbol{I}\right)$ 
and
\begin{equation}
  p_\theta\left(\boldsymbol{x}_{t-1} \mid \boldsymbol{x}_t\right)=\mathcal{N}\left(\boldsymbol{x}_{t-1} ; \boldsymbol{\mu}_\theta\left(\boldsymbol{x}_t, t\right), \sigma_t^2 \boldsymbol{I}\right), \nonumber
\end{equation}
where $\sigma_t \in \mathbb{R}_{\geq 0}$. 
$p_\theta(\boldsymbol{x}_{0: T})$ is called the \textit{reverse process}. It gradually denoises a noise $\boldsymbol{x}_T \sim \mathcal{N}\left(\mathbf{0}, \boldsymbol{I}\right)$. 

\textbf{Forward process:} To derive a lower bound on the log likelihood of the observed data, diffusion models introduce the approximate posterior $q\left(\boldsymbol{x}_{1: T} \mid \boldsymbol{x}_0\right)$ which is a Markov chain that samples from $\boldsymbol{x}_1$ to $\boldsymbol{x}_T$, $q\left(\boldsymbol{x}_{1: T} \mid\boldsymbol{x}_0\right):=\prod_{t=1}^T q\left(\boldsymbol{x}_t \mid \boldsymbol{x}_{t-1}\right)$.
Then the log data likelihood can be decomposed as $\mathbb{E}\left[\log p_\theta\left(\boldsymbol{x}_0\right)\right] = \mathbb{E}_q\left[\log \frac{p_\theta\left(\boldsymbol{x}_{0: T}\right)}{q\left(\boldsymbol{x}_{1: T} \mid \boldsymbol{x}_0\right)}\right] + D_{\mathrm{KL}}\left(q\left(\boldsymbol{x}_{1:T} \mid \boldsymbol{x}_0\right) \| p_\theta\left(\boldsymbol{x}_{1:T} \mid \boldsymbol{x}_0 \right)\right)$. Thus, we have the lower bound $\mathbb{E}\left[\log p_\theta\left(\boldsymbol{x}_0\right)\right] \geq \mathbb{E}_q\left[\log \frac{p_\theta\left(\boldsymbol{x}_{0: T}\right)}{q\left(\boldsymbol{x}_{1: T} \mid \boldsymbol{x}_0\right)}\right]$. When training the diffusion model, to maximize $\mathbb{E}\left[\log p_\theta\left(\boldsymbol{x}_0\right)\right]$, the parameters $\theta$ are learned to minimize the negative evidence lower bound:
\begin{equation}
    \mathop{\arg\min}_{\theta} \mathbb{E}_{q}\left[\log q\left(\boldsymbol{x}_{1: T} \mid \boldsymbol{x}_0\right)-\log p_\theta\left(\boldsymbol{x}_{0: T}\right)\right]. 
    \label{eq:loss}
\end{equation}
The Markov chain $q\left(\boldsymbol{x}_{1: T} \mid \boldsymbol{x}_0\right)$ is called the \textit{forward process}. It progressively turns the data $x_0$ into noise. The conditional distribution in each forward step is defined as:
\begin{equation}
    q\left(\boldsymbol{x}_t \mid \boldsymbol{x}_{t-1}\right):=\mathcal{N}\left(\boldsymbol{x}_t; \sqrt{\frac{\alpha_t}{\alpha_{t-1}}} x_{t-1},\left(1-\frac{\alpha_t}{\alpha_{t-1}}\right) \boldsymbol{I}\right),
    \label{eq:qtt1}
\end{equation}
where $\alpha_t \in (0,1]$ and $\alpha_{1}, ..., \alpha_T$ is a decreasing sequence, which ensures that the values on the diagonal of the covariance matrix are positive. Reparameterizing using the definition in Eq.~(\ref{eq:qtt1}), an important property of the forward process is that:
\begin{equation}
    q\left(\boldsymbol{x}_t \mid \boldsymbol{x}_0\right) = \mathcal{N}\left(\boldsymbol{x}_t ; \sqrt{\alpha_t} \boldsymbol{x}_0,\left(1-\alpha_t\right) \boldsymbol{I}\right).
    \label{eq:tforwardprop}
\end{equation}

\textbf{Training and sampling:} According to the definition of the reverse $p$ and forward $q$ processes, $\alpha_t$ and $\sigma_t$ for all $t$ are not learnable parameters.
Thus, DDPM simplifies Eq.~(\ref{eq:loss}) as:
\begin{equation}
    \mathop{\arg\min}_{\theta} \sum_t \mathbb{E}_q [D_{\mathrm{KL}}\left(q\left(\boldsymbol{}{x}_{t-1} \mid \boldsymbol{x}_t, \boldsymbol{x}_0\right) \| p_\theta\left(\boldsymbol{x}_{t-1} \mid \boldsymbol{x}_t\right)\right)].
    \label{eq:loss_sim}
\end{equation}
DDPM further chooses the form of $\boldsymbol{\mu}_\theta\left(\boldsymbol{x}_t, t\right)$ to be:
\begin{equation}
    \boldsymbol{\mu}_\theta\left(\boldsymbol{x}_t, t\right) = \frac{1}{\sqrt{\frac{\alpha_t}{\alpha_{t-1}}}}\left(\boldsymbol{x}_t-\frac{1-\frac{\alpha_t}{\alpha_{t-1}}}{\sqrt{1-\alpha}_t} \boldsymbol{\epsilon}_\theta\left(\boldsymbol{x}_t, t\right)\right),
    \label{eq:pmean}
\end{equation}
where $\boldsymbol{\epsilon}_\theta$ is a function with trainable parameters $\theta$. According to the above definitions of the forward and reverse processes and applying the parameterization shown in Eq.~(\ref{eq:pmean}), Eq.~(\ref{eq:loss_sim}) can be simplified as $\mathop{\arg \min}_{\theta} L(\theta)$, where:
\begin{equation}
\footnotesize
    L(\theta):= \sum_{t=1}^{T} \mathbb{E}_{\boldsymbol{x}_0, \boldsymbol{\epsilon}_t}\left[\gamma_t\left\|\boldsymbol{\epsilon}_t-\boldsymbol{\epsilon}_\theta\left(\sqrt{\alpha_t} \boldsymbol{x}_0+\sqrt{1-\alpha_t} \boldsymbol{\epsilon}_t, t\right)\right\|^2\right]
    \label{eq:loss_teacher_f}
\end{equation}
with $\gamma_t = \frac{(\alpha_{t-1}-\alpha_t)^2}{2 \sigma_t^2 \alpha_t \alpha_{t-1}(1-\alpha_t)}$ and $\boldsymbol{\epsilon}_t \sim \mathcal{N}(\mathbf{0}, \boldsymbol{I})$. It is important to note that when deriving this loss, DDPM reparameterize $\boldsymbol{x}_t$ as
\begin{equation}
    \boldsymbol{x}_t = \sqrt{\alpha_t}\boldsymbol{x}_0 + \sqrt{1-\alpha_t} \boldsymbol{\epsilon}_t,
    \label{eq:repara_teacher}
\end{equation}
using the property in Eq.~(\ref{eq:tforwardprop}).
DDPM 
further simplifies 
$L$ by setting $\gamma_t = 1$ independent of $\alpha_{1:T}$.

The number of sampling steps $T$ decides the generalization capability and sampling cost of diffusion models. A big $T$ leads to a reverse process with high generalization capability that better captures the pattern of $x_0$. However, it also increases the sampling time and makes the sampling from DDPMs significantly slower than other generative models, e.g, GANs. Such inefficiency promotes our design of \alg to reduce the number of sampling steps via knowledge distillation.

\subsection{Single-fold Distilled Diffusion (\alg)}
\label{sec:methodology}

\begin{figure*}[t]
	\centering
	{       \includegraphics[width=13cm, height=3cm]{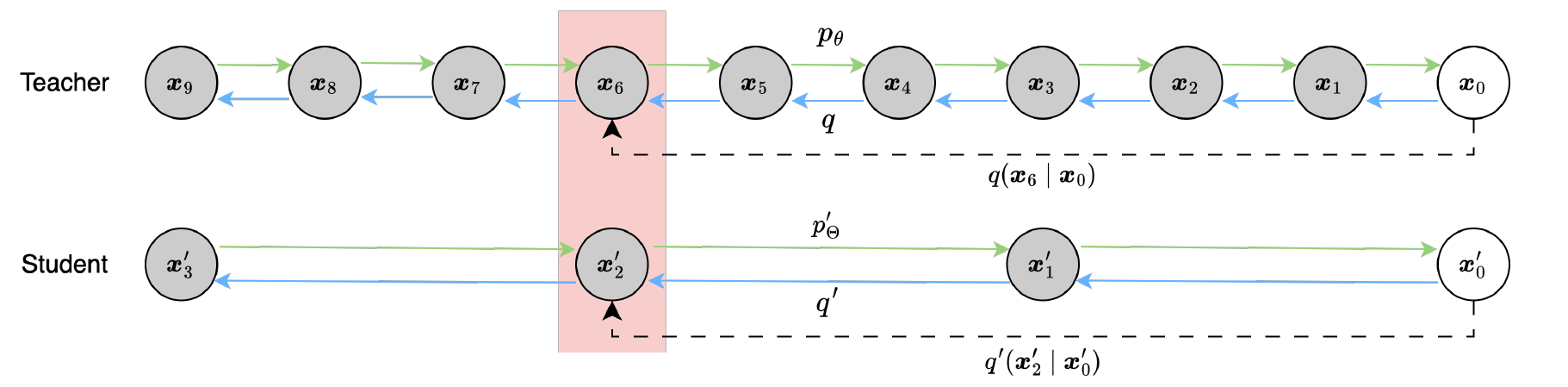} }
	\caption{Single-Fold Distillation of Diffusion Model (\alg). The student accelerates the inference by a small number of steps $T^{\prime}$ instead of a large $T$. We use $T=9$ and $T^{\prime}=3$ in the figure for readability.
    To align the teacher and student Markov chains, we propose to match the intermediate hidden variables to make, e.g., $q^{\prime}(\boldsymbol{x}^{\prime}_2 = \boldsymbol{x}_6 |\boldsymbol{x}^{\prime}_0 = \boldsymbol{x}_0$) equal to $q(\boldsymbol{x}_6|\boldsymbol{x}_0)$.
    }
    \label{fig:graph_model}
\end{figure*}


Instead of multiple folds like progressive distillation, which introduces distortion at every fold by retraining, we want to extract knowledge from the teacher model through a single fold. 
The first consideration is aligning steps from a $T$-step teacher to steps of a given smaller $T^{\prime}$-step student. Here, our goal is to distill the knowledge of the teacher model by mimicking its hidden variables ($\boldsymbol{x}_1,...,\boldsymbol{x}_T$) by a compressed student model\footnote{Hereon we use '$^{\prime}$' in symbols to indicate the corresponding student variables.} with hidden variables ($\boldsymbol{x}^{\prime}_1,...,\boldsymbol{x}^{\prime}_{T^{\prime}}$), where $T^{\prime} \ll T$.

A crucial challenge stems from the need to map a subset of multiple consecutive steps at the teacher into one single step at the student (see Fig.~\ref{fig:graph_model}).
For example, given an index\footnote{For simplicity, we assume that $T$ is divisible by $T^{\prime}$ but this is not necessary (see Sec.~\ref{subsec:flexible}).} $c \cdot t$, where $c = T/T^{\prime}$ and $c \in \mathbb{Z}$, according to Sec.~\ref{subsec:background}, the distributions we can explicitly obtain from the teacher are $q\left(\boldsymbol{x}_{c \cdot t} \mid \boldsymbol{x}_{c \cdot t-1}\right)$ (see Eq.~\ref{eq:qtt1}). However, mapping $\boldsymbol{x}^{\prime}_t$ with $\boldsymbol{x}_{c \cdot t}$ from the student to the teacher does not hold for the next step, i.e., $\boldsymbol{x}^{\prime}_{t-1}$ does not correspond to $\boldsymbol{x}_{c \cdot t-1}$, which is supposed to map $\boldsymbol{x}_{c(t-1)}$. Thus, when distilling the DDPM-like Markov chains, it is not reasonable to simply and straightforwardly simulate $q^{\prime}(\boldsymbol{x}^{\prime}_t|\boldsymbol{x}^{\prime}_{t-1})$ as $q(\boldsymbol{x}_{c \cdot t}|\boldsymbol{x}_{c \cdot t-1})$ while correspondingly estimating $p^{\prime}_{\Theta}(\boldsymbol{x}^{\prime}_t|\boldsymbol{x}^{\prime}_{t-1})$ as $p_{\theta}(\boldsymbol{x}_{c \cdot t}|\boldsymbol{x}_{c \cdot t-1})$, where the notations $q^{\prime}$ and $p^{\prime}_{\Theta}$ are used to represent the forward process and reverse process of the student model, respectively.

To overcome this challenge, we define a novel student model as follows. Note that the key definition for diffusion models is the forward process, as it determines the variational inference and dominates the training of the model. Therefore, to better distill the knowledge from the teacher, we design the forward process of the student $q^{\prime}\left(\boldsymbol{x}^{\prime}_{1: T^{\prime}} \mid \boldsymbol{x}^{\prime}_0\right)$ by extracting the teacher's forward process $q\left(\boldsymbol{x}_{1: T} \mid \boldsymbol{x}_0\right)$. Specifically, 
to ensure correspondence between the latent variables in the two diffusion models, given any $t \in [1, T^{\prime}]$, the proposed distillation algorithm \textbf{aims} to make $q^{\prime}(\boldsymbol{x}^{\prime}_t = \boldsymbol{x}_{c \cdot t} \mid \boldsymbol{x}^{\prime}_0 = \boldsymbol{x}_0)$ equal to $q(\boldsymbol{x}_{c \cdot t}|\boldsymbol{x}_0)$, which has a close-form solution (see Eq.~\ref{eq:tforwardprop}). After fixing $q^{\prime}$ by such an approximation in the forward process, the reverse process of the student is constructed accordingly. In the following, we show in detail how to define and train a student model.


\subsection{The forward process of the student model}
\label{subsec:forward}
To distill the DDPM teacher, we also assume that the student model has a Markovian forward process. Thus  $q^{\prime}\left(\boldsymbol{x}^{\prime}_{1: T^{\prime}} \mid \boldsymbol{x}^{\prime}_0\right)$ is a Markov chain that can be factorized as $$q^{\prime}\left(\boldsymbol{x}^{\prime}_{1: T^{\prime}} \mid \boldsymbol{x}^{\prime}_0\right):=\prod_{t=1}^{T^{\prime}} q^{\prime}\left(\boldsymbol{x}^{\prime}_t \mid \boldsymbol{x}^{\prime}_{t-1}\right).$$ 
As shown in Eq.~(\ref{eq:qtt1}), the forward process of the teacher is defined by a decreasing sequence $\alpha_{1}, ..., \alpha_T$. As for the student, different from the Gaussian distribution shown in Eq.~(\ref{eq:qtt1}), we set the student's forward process to the following form:
\begin{equation}
\footnotesize
q^{\prime}\left(\boldsymbol{x}^{\prime}_t \mid \boldsymbol{x}^{\prime}_{t-1}\right):=\mathcal{N}\left(\boldsymbol{x}^{\prime}_t; \sqrt{\frac{\alpha_{c \cdot t}}{\alpha_{c \cdot t-c}}} \boldsymbol{x}^{\prime}_{t-1},\left(1-\frac{\alpha_{c \cdot t}}{\alpha_{c \cdot t-c}}\right) \boldsymbol{I}\right),
    \label{eq:qtt1s}
\end{equation}
for all $t \in [1,T^{\prime}]$ based on the elements of the sequence $\{\alpha_t\}_{t=1}^{T}$ (the hyper-parameters of the given teacher). Then, according to this forward process definition of the student, we have the property:
\begin{equation}
q^{\prime}(\boldsymbol{x}^{\prime}_t \mid \boldsymbol{x}^{\prime}_0) =  \mathcal{N}\left(\boldsymbol{x}^{\prime}_t ; \sqrt{\alpha_{c \cdot t}} \boldsymbol{x}^{\prime}_0,\left(1-\alpha_{c \cdot t}\right) \boldsymbol{I}\right),
\label{eq:qxtx0}
\end{equation}
(see Appendix~\ref{app:sec1}). This  \textbf{ensures} that $q^{\prime}(\boldsymbol{x}^{\prime}_t = \boldsymbol{x}_{c \cdot t} \mid \boldsymbol{x}^{\prime}_0 = \boldsymbol{x}_0) = q(\boldsymbol{x}_{c \cdot t}|\boldsymbol{x}_0)$, where $\boldsymbol{x}^{\prime}_t$ corresponds to $\boldsymbol{x}_{c \cdot t}$. Thus, the student's Markov chain $q^{\prime}\left(\boldsymbol{x}^{\prime}_{1: T^{\prime}} \mid \boldsymbol{x}^{\prime}_0\right)$ can be regarded as a simplified copy of the teacher's forward process $q\left(\boldsymbol{x}_{1: T} \mid \boldsymbol{x}_0\right)$.

Before introducing the reverse process, we derive some important forward process posteriors. Applying Bayes' rule
    $q^{\prime}\left(\boldsymbol{x}^{\prime}_{t-1} \mid \boldsymbol{x}^{\prime}_t, \boldsymbol{x}^{\prime}_0\right)=q^{\prime}\left(\boldsymbol{x}^{\prime}_t \mid \boldsymbol{x}^{\prime}_{t-1}, \boldsymbol{x}^{\prime}_0\right) \frac{q^{\prime}\left(\boldsymbol{x}^{\prime}_{t-1} \mid \boldsymbol{x}^{\prime}_0\right)}{q^{\prime}\left(\boldsymbol{x}^{\prime}_t \mid \boldsymbol{x}^{\prime}_0\right)}$,
and the Markov chain property that 
   $q^{\prime}\left(\boldsymbol{x}^{\prime}_t \mid \boldsymbol{x}^{\prime}_{t-1}, \boldsymbol{x}^{\prime}_0\right) = q^{\prime}\left(\boldsymbol{x}^{\prime}_t \mid \boldsymbol{x}^{\prime}_{t-1}\right)$,
we have the 
posteriors:
\begin{equation}
\footnotesize
\begin{aligned}
    q^{\prime}\left(\boldsymbol{x}^{\prime}_{t-1} \mid \boldsymbol{x}^{\prime}_t, \boldsymbol{x}^{\prime}_0\right) =   
    \mathcal{N}\left(\boldsymbol{x}^{\prime}_{t-1} ; \frac{(1-\alpha_{c \cdot t-c})\sqrt{\alpha_{c \cdot t}}}{(1-\alpha_{c \cdot t})\sqrt{\alpha_{c \cdot t-c}}}\boldsymbol{x}^{\prime}_t + \frac{\alpha_{c \cdot t-c}-\alpha_{c \cdot t}}{(1-\alpha_{c \cdot t})\sqrt{\alpha_{c \cdot t-c}}}\boldsymbol{x}^{\prime}_0, \sigma^{\prime}_t \boldsymbol{I}\right),
\end{aligned}
\label{eq:qposter}
\end{equation}
where $\sigma^{\prime}_t = \frac{(1-\alpha_{c \cdot t-c})(\alpha_{c \cdot t-c}-\alpha_{c \cdot t})}{(1-\alpha_{c \cdot t})\alpha_{c \cdot t-c}}$ (see Appendix~\ref{app:sec2} for the derivation).



\subsection{The reverse process of the student model}

In the following, we define the reverse process of the student model according to its forward process. Similarly, it is also a Markov chain represented as  $p^{\prime}_\Theta(\boldsymbol{x}^{\prime}_{0: T^{\prime}}) := p^{\prime}_{\Theta}\left(\boldsymbol{x}^{\prime}_T\right) \prod_{t=1}^{T^{\prime}} p^{\prime}_\Theta\left(\boldsymbol{x}^{\prime}_{t-1} \mid \boldsymbol{x}^{\prime}_t\right),$ where $p^{\prime}_{\Theta}\left(\boldsymbol{x}^{\prime}_T\right) = \mathcal{N}\left( \mathbf{0}, \boldsymbol{I}\right)$ and $\Theta$ represents the learnable parameters of the student. 

Then, we decide the form of $p^{\prime}_\Theta\left(\boldsymbol{x}^{\prime}_{t-1} \mid \boldsymbol{x}^{\prime}_t\right)$. Referring to Eq.~(\ref{eq:qxtx0}), we can reparameterize $\boldsymbol{x}^{\prime}_t$ as a linear combination of $\boldsymbol{x}^{\prime}_0$ and $\epsilon_t \sim \mathcal{N}(\mathbf{0}, \boldsymbol{I})$ that is $\boldsymbol{x}^{\prime}_t=\sqrt{\alpha_{c\cdot t}} \boldsymbol{x}^{\prime}_0+\sqrt{1-\alpha_{c\cdot t}} \epsilon_t$. Let the model $\epsilon^{\prime}_{\Theta}(\boldsymbol{x}^{\prime}_t, t)$ predict $\epsilon_t$, we have a prediction of $\boldsymbol{x}^{\prime}_0$ given $\boldsymbol{x}^{\prime}_t$:
\begin{equation}
f_\Theta^{(t)}\left(\boldsymbol{x}^{\prime}_t\right):=\left(\boldsymbol{x}^{\prime}_t-\sqrt{1-\alpha_{c \cdot t}} \cdot \epsilon^{\prime}_\Theta\left(\boldsymbol{x}^{\prime}_t,t\right)\right) / \sqrt{\alpha_{c\cdot t}}
\label{eq:predictor}
\end{equation}
According to Eq.~(\ref{eq:loss_sim}), we know that in the student diffusion model, $p^{\prime}_\Theta\left(\boldsymbol{x}^{\prime}_{t-1} \mid \boldsymbol{x}^{\prime}_t\right)$ is a distribution that predicts $q^{\prime}(\boldsymbol{x}^{\prime}_{t-1} \mid \boldsymbol{x}^{\prime}_t, \boldsymbol{x}^{\prime}_0)$. Thus, we define that for all $t \in [1, T^{\prime}]$,
\begin{equation}
    p^{\prime}_\Theta\left(\boldsymbol{x}^{\prime}_{t-1} \mid \boldsymbol{x}^{\prime}_t\right) = q^{\prime}(\boldsymbol{x}^{\prime}_{t-1} \mid \boldsymbol{x}^{\prime}_t, f_\Theta^{(t)}\left(\boldsymbol{x}^{\prime}_t\right)).
    \label{eq:pdef}
\end{equation}
Then, based on Eq.~(\ref{eq:qposter}) and Eq.~(\ref{eq:pdef}), by replacing $\boldsymbol{x}^{\prime}_0$ as the predictor $f_\Theta^{(t)}\left(\boldsymbol{x}^{\prime}_t\right)$, we have: 
\begin{equation}
     p^{\prime}_\Theta\left(\boldsymbol{x}^{\prime}_{t-1} \mid \boldsymbol{x}^{\prime}_t\right) =  \mathcal{N}\left(\boldsymbol{x}^{\prime}_{t-1} ; \boldsymbol{\mu}^{\prime}_\Theta\left(\boldsymbol{x}^{\prime}_t, t\right), \sigma^{\prime}_t \boldsymbol{I}\right),
     \label{eq:pt1t}
\end{equation}
where the mean is:
\begin{equation}
\footnotesize
\boldsymbol{\mu}^{\prime}_\Theta\left(\boldsymbol{x}^{\prime}_t, t\right) = \frac{(1-\alpha_{ct-c})\sqrt{\alpha_{ct}}}{(1-\alpha_{ct})\sqrt{\alpha_{ct-c}}}\boldsymbol{x}^{\prime}_t + \frac{\alpha_{ct-c}-\alpha_{ct}}{(1-\alpha_{ct})\sqrt{\alpha_{ct-c}}}f_\Theta^{(t)}\left(\boldsymbol{x}^{\prime}_t\right).
\label{eq:revers_mean}
\end{equation}

\subsection{Distillation procedure}


Having defined the forward and reverse processes, here we design the algorithm for training the student model. The reparameterization of  $\boldsymbol{\mu}^{\prime}_\Theta\left(\boldsymbol{x}^{\prime}_t, t\right)$ shown in Eq.~(\ref{eq:revers_mean}) can be applied to derive the training loss of the student. For maximizing the log data likelihood of the observed data $\{\boldsymbol{x}^{\prime}_0\}$ on the student\footnote{In distillation, student and teacher observe the same training samples. Thus, $\boldsymbol{x}^{\prime}_0$ always equals $\boldsymbol{x}_0$ and $\{x^{\prime}_0\}$ is equivalent to $\{x_0\}$. To avoid confusion, we use $\{x^{\prime}_0\}$ to represent the observed data when training the student.}, referring to Eq.~(\ref{eq:loss_sim}), it is equivalent to minimize the Kullback-Leibler divergence between Eq.~(\ref{eq:qposter}) and Eq.~(\ref{eq:pt1t}). Then by using Eq.~(\ref{eq:qposter}), (\ref{eq:pt1t}), (\ref{eq:revers_mean}) and (\ref{eq:predictor}), we have the following loss for training the student (see Appendix~\ref{app:sec3} for the details):
\begin{equation}
\footnotesize
    L(\Theta):= \sum_{t=1}^{T} \mathbb{E}_{\boldsymbol{x}^{\prime}_0, \boldsymbol{\epsilon}^{\prime}_t}\left[\gamma^{\prime}_t\left\|\boldsymbol{\epsilon}^{\prime}_t-\boldsymbol{\epsilon}_\Theta \left(\sqrt{\alpha_{c\cdot t}} \boldsymbol{x}^{\prime}_0+\sqrt{1-\alpha_{c \cdot t}} \boldsymbol{\epsilon}^{\prime}_t, t\right)\right\|^2\right],
    \label{eq:loss_student_initial}
\end{equation}
where $\gamma^{\prime}_t = \frac{(\alpha_{c\cdot t-c}-\alpha_{c\cdot t})^2}{2 {\sigma^{\prime}_t}^2 \alpha_{c\cdot t} \alpha_{c\cdot t-c}(1-\alpha_{c\cdot t})}$. By the loss (Eq.~\ref{eq:loss_student_initial}), we can train the defined student model from scratch using the observed data $\{\boldsymbol{x}^{\prime}_0\}$. However, in order to extract the knowledge from the trained teacher, in the following we connect the training of the student with the trained teacher.

In the derivation of the loss $L(\Theta)$ (Eq.~\ref{eq:loss_student_initial}), we use the following reparameterization:
\begin{equation}
    \boldsymbol{x}^{\prime}_t = \sqrt{\alpha_{c\cdot t}} \boldsymbol{x}^{\prime}_0+\sqrt{1-\alpha_{c \cdot t}} \boldsymbol{\epsilon}^{\prime}_t.
    \label{eq:repara_student}
\end{equation}
According to Eq.~(\ref{eq:repara_teacher}), we know that $\boldsymbol{x}_{c\cdot t} = \sqrt{\alpha_{c\cdot t}}\boldsymbol{x}_0 + \sqrt{1-\alpha_{c\cdot t}} \boldsymbol{\epsilon}_{c\cdot t}$. In our distillation, we want to make the hidden variable $\boldsymbol{x}^{\prime}_t$ of the student equal to its corresponding hidden variable $\boldsymbol{x}_{c\cdot t}$ of the teacher. As the student and the teacher use the same observed data, $\boldsymbol{x}^{\prime}_0 = \boldsymbol{x}_0$, by Eq.~(\ref{eq:repara_student}), if we assume $\boldsymbol{\epsilon}^{\prime}_t = \boldsymbol{\epsilon}_{c\cdot t}$, then we can have $\boldsymbol{x}^{\prime}_t = \boldsymbol{x}_{c\cdot t}$. By the training loss of the teacher $L(\theta)$ (Eq.~\ref{eq:loss_teacher_f}), we know that the output $\boldsymbol{\epsilon}_\theta\left(\sqrt{\alpha_{c\cdot t}} \boldsymbol{x}_0+\sqrt{1-\alpha_{c\cdot t}} \boldsymbol{\epsilon}_{c\cdot t}, c\cdot t\right)$ from the trained function $\boldsymbol{\epsilon}_{\theta}$ of the teacher is a good predictor for $\boldsymbol{\epsilon}_{c\cdot t}$ (also for $\boldsymbol{\epsilon}^{\prime}_t$). Thus, we naturally rewrite the student loss (Eq.~\ref{eq:loss_student_initial}) as 
\begin{equation}
\footnotesize 
    L(\Theta):= \sum_{t=1}^{T} \mathbb{E}_{\boldsymbol{x}^{\prime}_0, \boldsymbol{\epsilon}^{\prime}_t}[\gamma^{\prime}_t\|\boldsymbol{\epsilon}_\theta\left(\sqrt{\alpha_{c\cdot t}} \boldsymbol{x}^{\prime}_0+\sqrt{1-\alpha_{c\cdot t}} \boldsymbol{\epsilon}^{\prime}_{t}, c\cdot t\right) 
    -\boldsymbol{\epsilon}_\Theta \left(\sqrt{\alpha_{c\cdot t}} \boldsymbol{x}^{\prime}_0+\sqrt{1-\alpha_{c \cdot t}} \boldsymbol{\epsilon}^{\prime}_t, t\right)\|^2].
    \label{eq:loss_student_final}
\end{equation}

\begin{minipage}{0.55\textwidth}
\vspace{-2em}
\begin{algorithm}[H]
   \caption{The distillation procedure of \alg}
   \label{alg:training}
\begin{algorithmic}[1]
   \STATE {\bfseries Input:} dataset $D = \{x^{\prime}_0\}$ and teacher $\boldsymbol{\epsilon}_{\theta}$ 
   \REPEAT
   \STATE $\boldsymbol{x}^{\prime}_0 \sim D$.
   \STATE $t \sim {Uniform}(\{1,...,T^{\prime}\})$
   \STATE $\boldsymbol{\epsilon} \sim \mathcal{N}(\mathbf{0}, \boldsymbol{I})$
   \STATE $\hat{\boldsymbol{\epsilon}} = \boldsymbol{\epsilon}_\theta\left(\sqrt{\alpha_{c\cdot t}} \boldsymbol{x}^{\prime}_0+\sqrt{1-\alpha_{c\cdot t}} \boldsymbol{\epsilon}, c\cdot t\right)$
   \STATE $L_{\Theta} = \left\|\hat{\boldsymbol{\epsilon}} -\boldsymbol{\epsilon}_\Theta \left(\sqrt{\alpha_{c\cdot t}} \boldsymbol{x}^{\prime}_0+\sqrt{1-\alpha_{c \cdot t}} \boldsymbol{\epsilon}, t\right)\right\|^2$ 
   \STATE Updating the student by $\nabla_{\Theta} L_{\Theta}$
   \UNTIL{converged}
\end{algorithmic}
\end{algorithm}
\end{minipage}
\hfill
\begin{minipage}{0.4\textwidth}
\vspace{-2.8em}
\begin{algorithm}[H]
   \caption{Sampling of \alg}
   \label{alg:sampling}
\begin{algorithmic}[1]
   \STATE $\boldsymbol{x}^{\prime}_{T^{\prime}} \sim \mathcal{N}(\mathbf{0}, \boldsymbol{I})$
   \FOR{$t = T^{\prime},...,1$}
   \IF{t > 1}
   \STATE $\boldsymbol{\epsilon} \sim \mathcal{N}(\mathbf{0}, \boldsymbol{I})$
   \ELSE
   \STATE $\boldsymbol{\epsilon} = 0$
   \ENDIF
   \STATE $\boldsymbol{x}^{\prime}_{t-1} = \boldsymbol{\mu}^{\prime}_\Theta\left(\boldsymbol{x}^{\prime}_{t}, t\right) + \sigma^{\prime}_{t} \boldsymbol{\epsilon}$
   \ENDFOR
\end{algorithmic}
\end{algorithm}
\end{minipage}

\textbf{Training:} We train the student according to the loss (Eq.~\ref{eq:loss_student_final}). Algorithm~\ref{alg:training} illustrates the training procedure of the student. During implementation in Sec.~\ref{sec:experiments}, we simplify the loss by setting $\gamma^{\prime}_t=1$, a simpler approach shown beneficial for sample quality. 

\textbf{Sampling:} Since the reverse process of the student is defined by Eq.~(\ref{eq:pt1t}), given the input $\boldsymbol{x}^{\prime}_{T^{\prime}}$, we can steadily sample all variables from $\boldsymbol{x}^{\prime}_{T^{\prime}-1}$ to $\boldsymbol{x}^{\prime}_0$. Note that the student model is also a DDPM model. Thus, on the distilled student, we can apply any sampling algorithm compatible with DDPM, e.g., DDIM, ODE solvers \cite{lu2022dpm}, etc.


\subsection{Distillation on flexible sub-sequence}
\label{subsec:flexible}

In the previous derivation, the student extracts the knowledge from a special variable subset $\{\boldsymbol{x}_{c\cdot t}\}$ of the teacher where $t \in [1, T^{\prime}]$. Indeed, our proposed method can be extended to a more general case where we distill the knowledge from any given subset $\{\boldsymbol{x}_{\phi_0},...,\boldsymbol{x}_{\phi_{T^{\prime}}}\}$ of the teacher where $\phi$ is an increasing sub-sequence of $\{0,...,T\}$, $\phi_0 = 0$ and $\phi_{T^{\prime}} = T$. In this general case, the Gaussian mean of $p^{\prime}_\Theta\left(\boldsymbol{x}^{\prime}_{t-1} \mid \boldsymbol{x}^{\prime}_t\right)$ (see Eq.~\ref{eq:pt1t}) is 
\begin{equation*}
\footnotesize
\boldsymbol{\mu}^{\prime}_\Theta\left(\boldsymbol{x}^{\prime}_t, t\right) = \frac{(1-\alpha_{\phi_{t-1}})\sqrt{\alpha_{\phi_{t}}}}{(1-\alpha_{\phi_{t}})\sqrt{\alpha_{\phi_{t-1}}}}\boldsymbol{x}^{\prime}_t + \frac{\alpha_{\phi_{t-1}}-\alpha_{\phi_{t}}}{(1-\alpha_{\phi_{t}})\sqrt{\alpha_{\phi_{t-1}}}}\mathcal{F}_\Theta^{(t)}\left(\boldsymbol{x}^{\prime}_t\right),
\end{equation*}
where $\mathcal{F}_\Theta^{(t)}\left(\boldsymbol{x}^{\prime}_t\right):=\left(\boldsymbol{x}^{\prime}_t-\sqrt{1-\alpha_{\phi_{t}}} \cdot \epsilon^{\prime}_\Theta\left(\boldsymbol{x}^{\prime}_t,t\right)\right) / \sqrt{\alpha_{\phi_t}}$ is a prediction of $\boldsymbol{x}^{\prime}_0$ given $\boldsymbol{x}^{\prime}_t$. Remarkably, $T$ no longer needs to be divisible by $T^{\prime}$. 
More details of this flexible sub-sequence setting is shown in Appendix~\ref{app:sec4}. 
This extension is beneficial as it greatly expands the applicability of our distillation under various steps of teacher diffusion models.






\section{Evaluation}
\label{sec:experiments}

\begin{figure}[th!]
\vspace{-2em}
	\centering
 \subfloat[Cifar-10]{	\includegraphics[width=0.25\linewidth]{"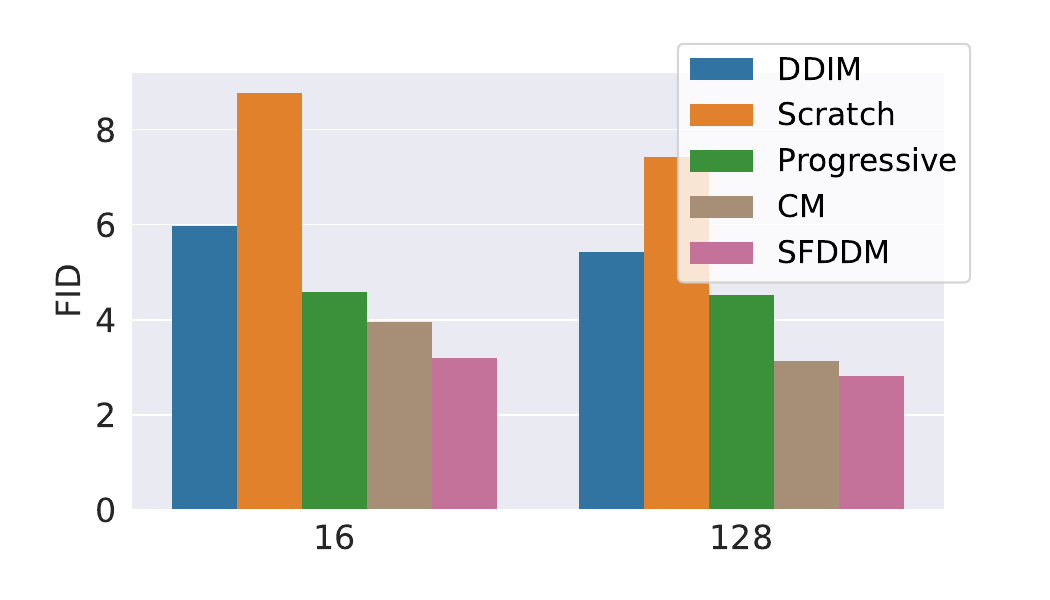"}
 }
 \subfloat[Bedroom]{	\includegraphics[width=0.25\linewidth]{"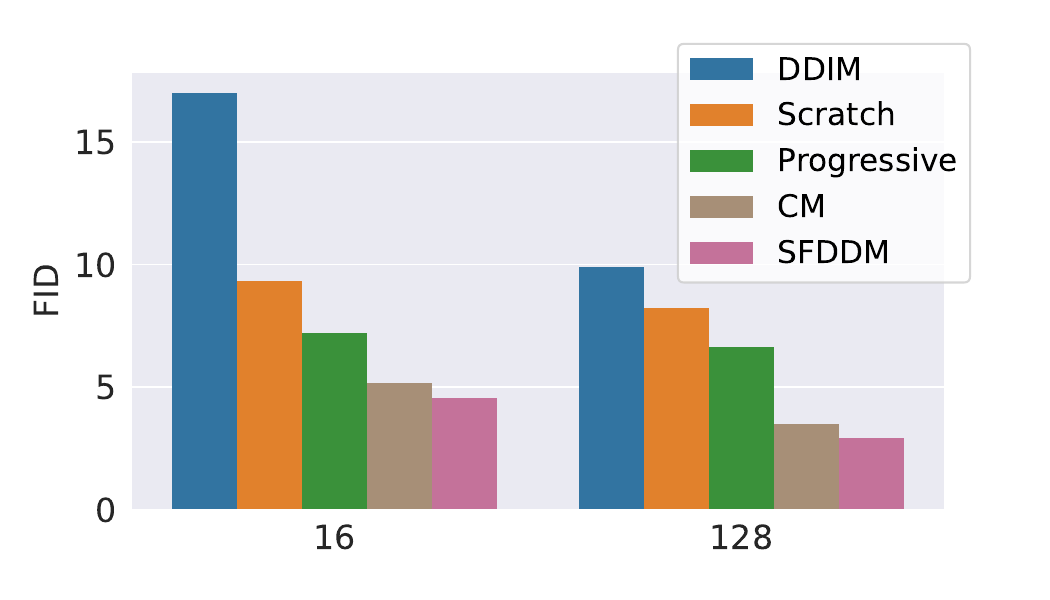"}
 }
 \subfloat[Church]{	\includegraphics[width=0.25\linewidth]{"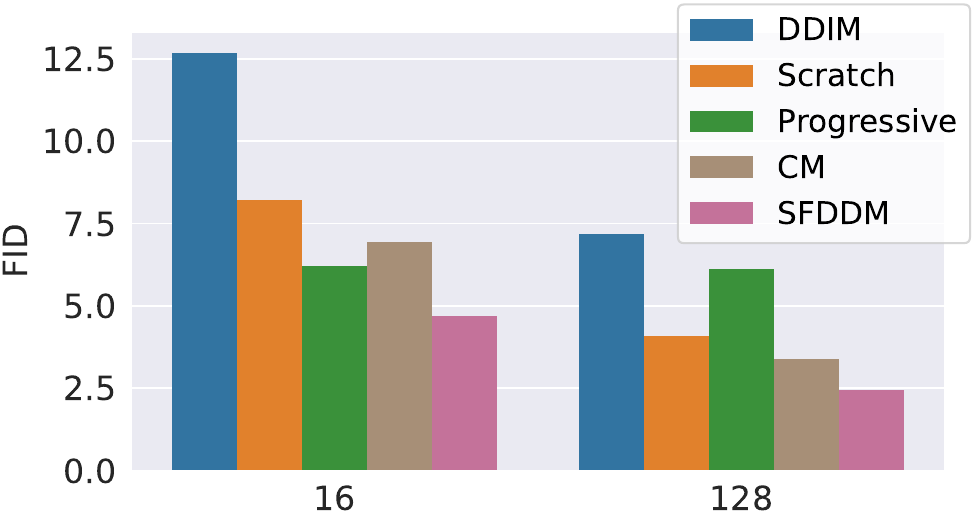"}
 }
 \subfloat[CelebA]{	\includegraphics[width=0.24\linewidth]{"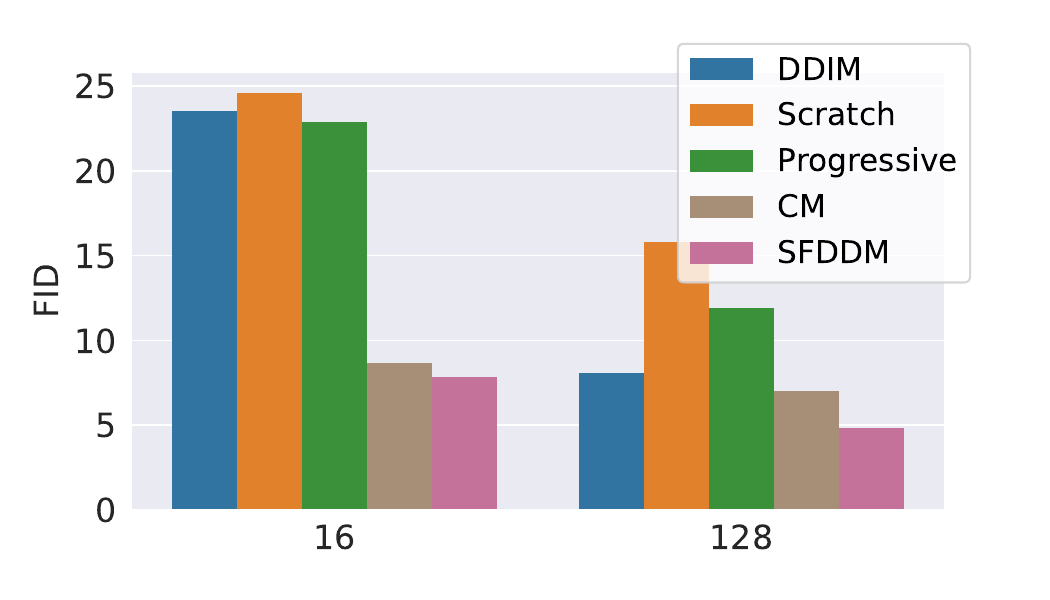"}
 }
 \vspace{-0.5em}
 \caption{FID under different number of sampling steps from the teacher $T=1024$, on four datasets.}
  \label{fig:fid_quality}
 \end{figure}

\begin{wrapfigure}{r}{0.4\textwidth}
\vspace{-1em}
  \begin{center}
    \includegraphics[width=0.4\textwidth]{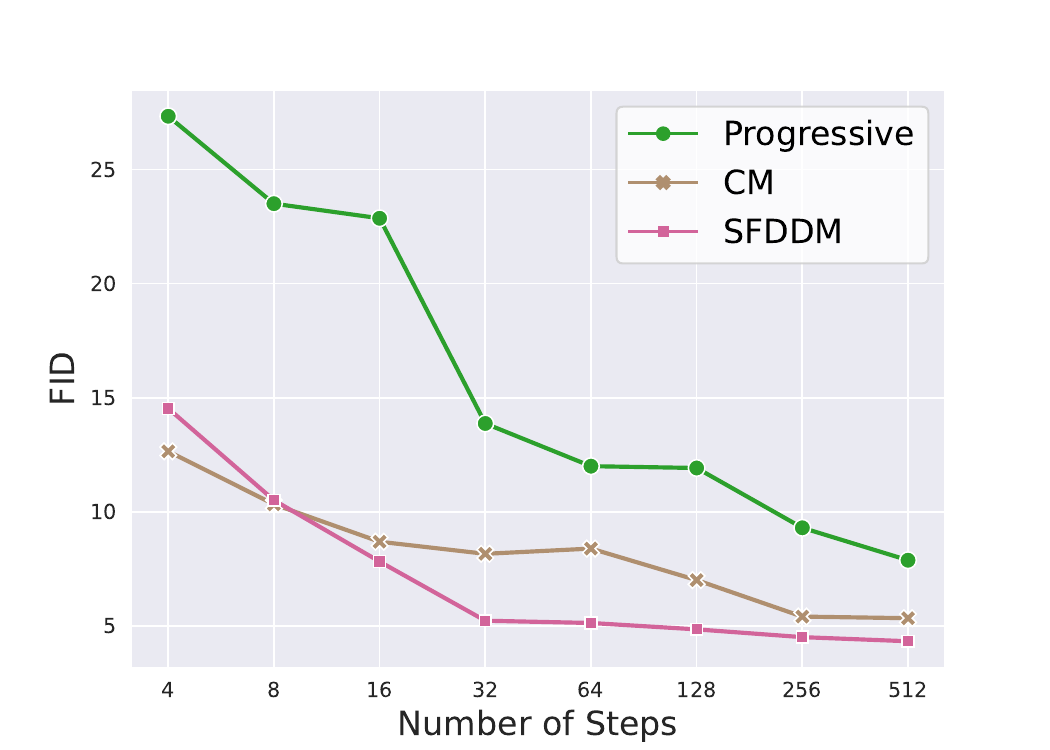}
  \end{center}
  \vspace{-1em}
  \caption{FID of the methods with different number of sampling steps on CelebA-HQ.}
  \label{fig:different_steps}
\end{wrapfigure}



\alg distills the knowledge of any arbitrary DDPM-like teacher models with efficiency and high sampling quality. We demonstrate its effectiveness by four image benchmark datasets: CIFAR-10~\cite{cifar:krizhevsky2009learning}, CelebA~\cite{celeba:liu2015faceattributes}, LSUN-Church and LSUN-Bedroom~\cite{lsun:yu15lsun}. For each dataset, we distill the same teacher diffusion model of DDPM into 16, 100, or 128 student steps. Note that although the original DDPM contains 1000 sampling steps, in this paper, we set it as 1024 steps in order to compare with progressive distillation ~\cite{progress:salimans2022progressive}, which requires a number of steps that is a power of 2 for progressive halving. The evaluation metric applied is FID together with perceptual visualization of sampled images. Besides image benchmarks, we also evaluate \alg on a tabular 2D Swiss Roll dataset (see Appendix~\ref{app_sec:tabular}). All remaining details on our experiments are given in Appendix~\ref{app:exp}.

\subsection{Sampling quality and efficiency}

\begin{figure*}[h]
\vspace{-1em}
	\centering
	{
            \includegraphics[width=1\textwidth]{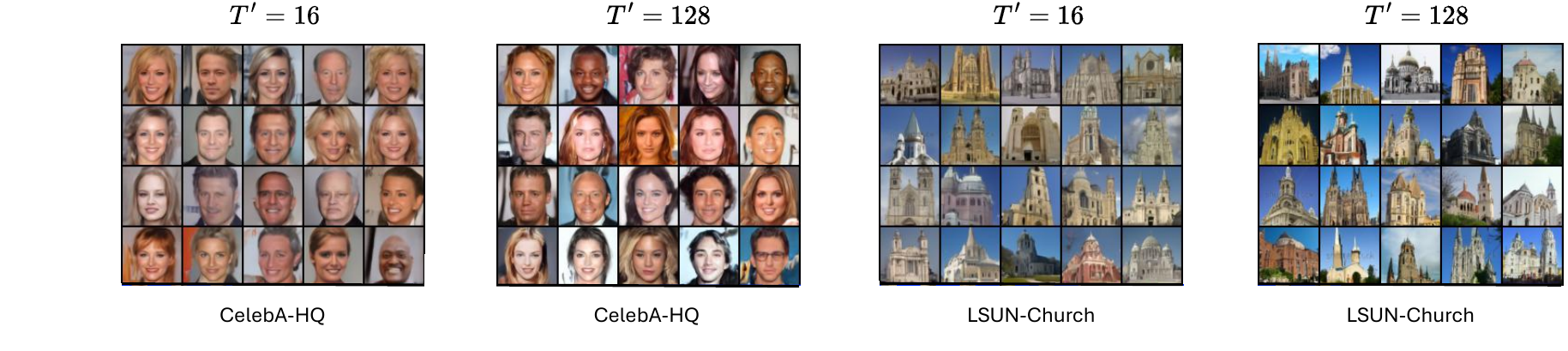} }
	\caption{Generated samples from \alg on different $T^{\prime}$.}
        \label{fig:quality_comp}
\vspace{-0.5em}
\end{figure*}

To demonstrate the quality and efficiency of \alg, we report the FID in Fig.~\ref{fig:fid_quality} 
on all four datasets comparing against DDIM, progressive distillation (``Progressive''), Consistency model (CM)~\cite{DBLP:conf/icml/SongD0S23}, and training directly on the smaller model with the same dataset as the teacher model (``From scratch''). Fig.~\ref{fig:quality_comp} also shows the images sampled by \alg
with both $T^{\prime} = 16$ and $T^{\prime} = 128$ student steps.

In general, our \alg achieves the best FID over different datasets and different small $T^{\prime}$. From the results, we observe that the sampling data quality increases with the increase of $T^{\prime}$, as more sampling steps match more hidden variables of the teacher model, better approximating the teacher distributions. This can also be clearly noticed from Fig.~\ref{fig:quality_comp} by generating images with more fine-grained features, e.g., hair texture, and details on clothes. 

Comparing the baselines, DDIM achieves mostly the lowest quality in sampled 
images as shown by high FID scores.
This stems from the nature that DDIM \textbf{does not} retrain a student model but focuses on improving the inference efficiency of the original model. Thus, it benefits from simplicity but falls short in terms of quality, when skipping too many intermediate steps during sampling, e.g., $T^{\prime} = 16$. On the other hand, training from scratch on a small diffusion model comes in as the second worst in terms of FID, due to the difficulty of capturing image features with a small number of steps. Progressive distillation yields marginal improvement as multiple folding and retraining increases distortion from teacher knowledge due to noises it introduces at each folding. An interesting finding emerges in the context of the LSUN-Church dataset with 128 sampling steps. Here, training from scratch surpasses progressive distillation in FID performance. This can be attributed to the fact that, with a relatively large number of steps, direct training exhibits superior quality compared to progressive distillation, where systematic distortion occurs fold by fold. This is consistent with the conclusion in~\cite{progress:salimans2022progressive}.
We also compare the student FID for progressive distillation, CM, and \alg with varying sampling steps from 4 to 512 in Fig.~\ref{fig:different_steps}. Overall, \alg outperforms the baselines with the sole exception of CM when $T=4$. This is within our expectations since CM is intrinsically designed for small values of $T$ but of different model types, rather than 
aiming for a high similarity 
to the teacher DDPM.






\subsection{Distillation with different sub-sequences}

In accordance with Sec.~\ref{subsec:flexible}, our algorithm can be extended to accommodate flexible sub-sequences of the student model. Here, we compare FID values for different choices to demonstrate their impact. 
Specifically, the different cases of flexible sub-sequence are designed by various degrees of concentration of mapped steps around the midpoint element. 
We define it by the percentage of elements distributed uniformly within a 5\% range near the midpoint element (i.e., 512) while the others are uniformly distributed in the remaining range of steps. Moreover, 
In our results presented in Tab.~\ref{tab:flexibility}, we include the concentration degrees of 40\%, 20\%, plus Scattered.  
``Scattered'' refers to the student model matching a sparse sub-sequence spread across the full teacher Markov chain. 
Scattered allows to cover more knowledge of the noising/denoising procedure form the teacher. Yet, concentrated choices, in which elements are nearby and centered in a partial part of the teacher chain, are still able to distill high-quality diffusion models, as shown by similarity in  FID scores.

\begin{table}[h]
\centering
  \caption{Distilling the same teacher with different sub-sequences on CelebA-HQ with $T^{\prime} = 16$.}
  \label{tab:flexibility}
  \begin{tabular}{c|c|c|c}
    \toprule
    sub-sequence & Concentrated (40\%) &  Concentrated (20\%) & Scattered\\
    \midrule
    FID & 7.95 & 7.59 & 7.82 \\

  \bottomrule
\end{tabular}
\end{table}


\subsection{Consistency between teacher and student}

\begin{figure*}
	\centering
	{
            \includegraphics[width=0.9\textwidth]{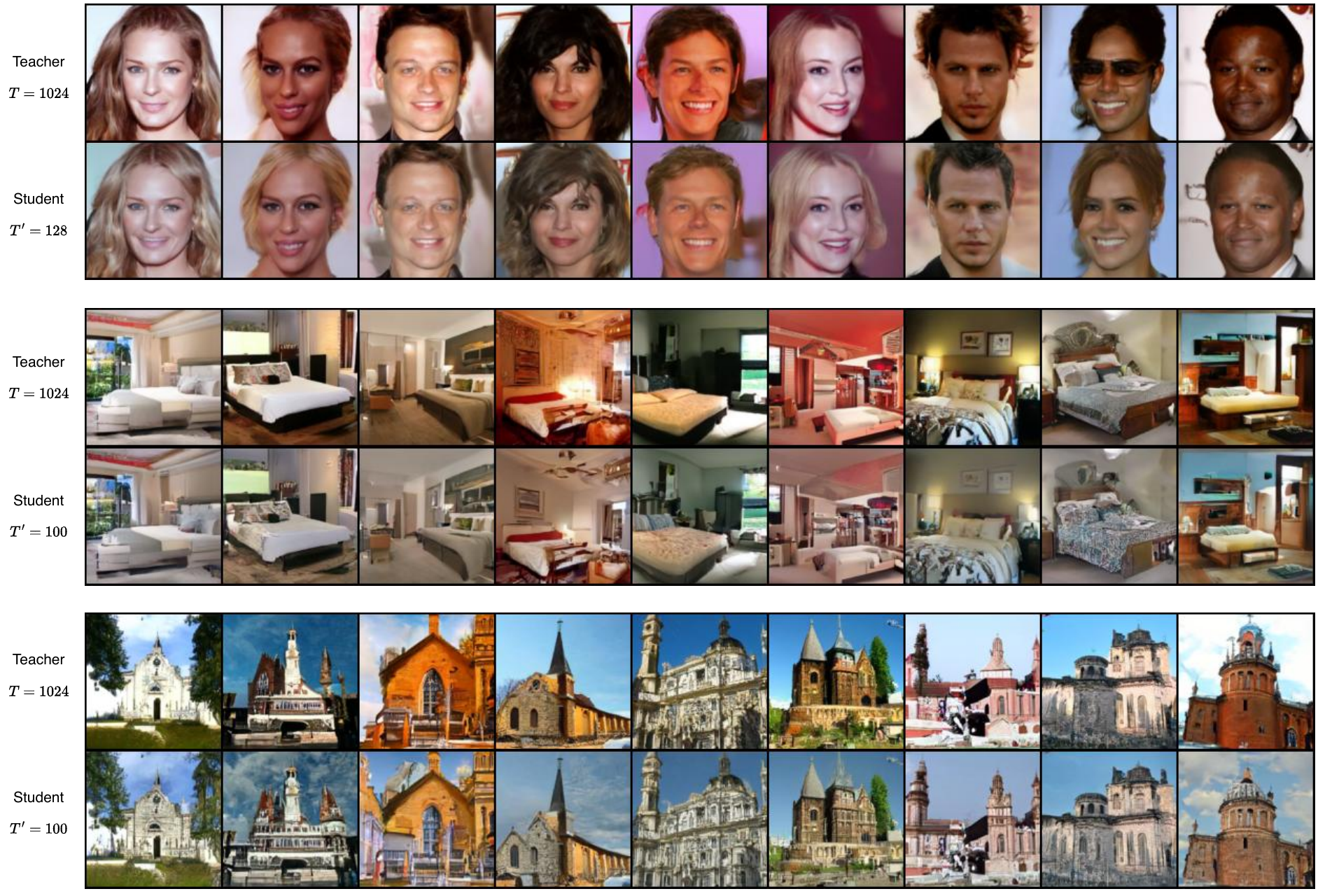} }
	\caption{Consistency on CelebA-HQ, LSUN-Bedroom and LSUN-Church: inputing the same noise.} 
        \label{fig:consistency}
\end{figure*}

We zoom into the consistency between images sampled by the teacher and student models when inputting identical noise. It is important to note that for a fair comparison of the distillation results, we use the DDIM sampling method (also applied in Sec.~\ref{subsec:interpolation}) for both the DDPM teacher and our \alg student. This is because the output of the original DDPM sampling is not solely determined by the input noise, owing to the introduced random factor during the stochastic generative process. In contrast, DDIM sampling ensures pair correspondence, i.e., same input, same output, facilitating a clear and accurate comparison. The outcomes across three distinct datasets are illustrated in Fig.~\ref{fig:consistency}. It is noteworthy that we employ different sampling steps for the student among different datasets, thereby validating the flexibility of our approach under varying numbers of sub-sequences, where the number of steps is not a power of 2. As evidenced by the images, it is clearly observed that inputting identical noise leads to similar outputs, showing our effectiveness in transferring knowledge from the teacher to a student having only approximately 1/10 of the steps.

\subsection{Interpolation on the teacher and the student}
\label{subsec:interpolation}

\begin{figure*}
	\centering
	{
    \includegraphics[width=0.9\textwidth]{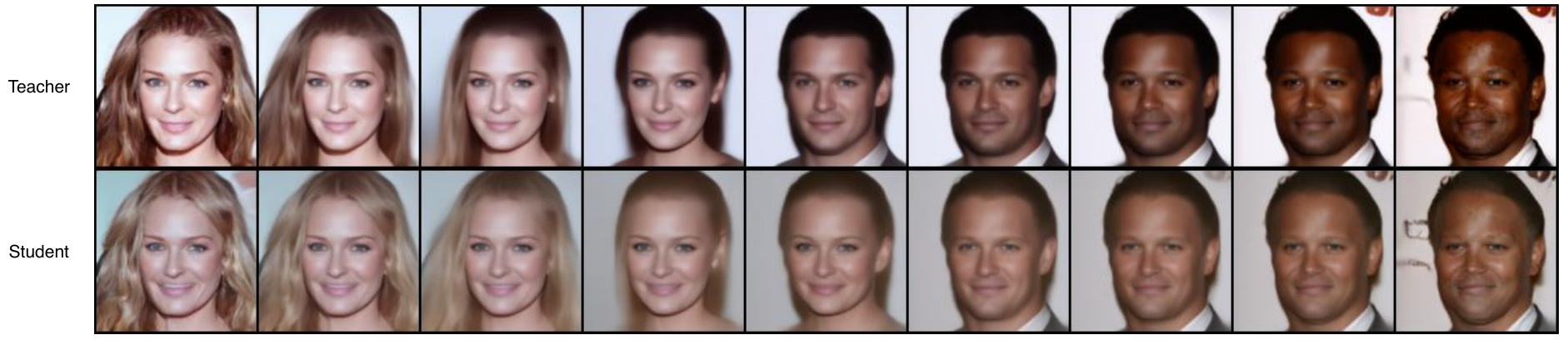} }
	\caption{Interpolation on the distilled student and original teacher.}
        \label{fig:interpotation}
\end{figure*}

We further assess the efficacy of knowledge transfer through measuring the similarity on semantic interpolation between the teacher and student models. We evenly interpolate between two given noises to showcase the intermediate sampled data in Fig.~\ref{fig:interpotation}. The results reveal a stable visual interpolation in the generated images since the input noise encodes distinctive high-level features of the image. Consequently, the interpolation data implicitly captures the features between the two noises into perceptually similar outputs. When comparing the output of the teacher and student, we observe similarity in each interpolation, showcasing the effectiveness of distillation as the student successfully inherits a stable and consistent sampling capability from the teacher. 

\section{Limitations} 

\label{sec:limitation}

The limitation of \alg is that it is not intrinsically specialized for extremely few sampling steps e.g., 1 or 2 steps. Its performance on very few sampling steps is not as good as CM. The reason is that \alg maintains the same model type as the teacher while CM forgoes such a property. An \alg student Markov chain is a simplified copy of the teacher's Markov chain. This maintains not only the model type but also provides common behaviors, e.g., leading to consistent outputs given the same input noise.

\section{Conclusion}

In this paper, we propose a novel and effective single-fold distillation method for diffusion models, \alg. In contrast to the prior study of progressive distillation, \alg is able to compress any $T$-step teacher model into any $T'$-step student model in single-fold distillation. The key enabling features are (i) new derivation of the forwarding process of the student model, which leverages the reparameterization of the teacher model and approximation of their Markovian states; and (ii) optimization of the denoise process of the student model by minimizing the difference of model outputs and distribution of the hidden variables. Our evaluation results on four datasets show that \alg achieves remarkable quality on FID allowing to strike better quality-compute tradeoffs.

\bibliographystyle{plain}
\bibliography{example_paper}

\newpage
\appendix

\onecolumn
\section{Proofs and extended derivations}

\subsection{The property of the forward process of the student}
\label{app:sec1}
\begin{lemma}
\label{lem:usefullemma}
For the Markovian assumption on the forward process $q^{\prime}\left(\boldsymbol{x}^{\prime}_{1: T^{\prime}} \mid \boldsymbol{x}^{\prime}_0\right):=\prod_{t=1}^{T^{\prime}} q^{\prime}\left(\boldsymbol{x}^{\prime}_t \mid \boldsymbol{x}^{\prime}_{t-1}\right)$ of the student and $q^{\prime}\left(\boldsymbol{x}^{\prime}_t \mid \boldsymbol{x}^{\prime}_{t-1}\right)$ defined in Eq.~(\ref{eq:qtt1s}), we have
\begin{equation}
    q^{\prime}(\boldsymbol{x}^{\prime}_t \mid \boldsymbol{x}^{\prime}_0) =  \mathcal{N}\left(\boldsymbol{x}^{\prime}_t ; \sqrt{\alpha_{c \cdot t}} \boldsymbol{x}^{\prime}_0,\left(1-\alpha_{c \cdot t}\right) \boldsymbol{I}\right),
\end{equation}
\end{lemma}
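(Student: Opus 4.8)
The plan is to prove this by induction on $t$, using the reparameterization (sum-of-Gaussians) trick exactly as in the standard DDPM marginalization, but tracking the student's rescaled coefficients $\alpha_{c\cdot t}/\alpha_{c\cdot t-c}$. First I would introduce the shorthand $\beta_t := \alpha_{c\cdot t}/\alpha_{c\cdot t-c}$ so that the one-step transition in Eq.~(\ref{eq:qtt1s}) reads $q^{\prime}(\boldsymbol{x}^{\prime}_t \mid \boldsymbol{x}^{\prime}_{t-1}) = \mathcal{N}(\boldsymbol{x}^{\prime}_t; \sqrt{\beta_t}\,\boldsymbol{x}^{\prime}_{t-1}, (1-\beta_t)\boldsymbol{I})$, and hence admits the reparameterization $\boldsymbol{x}^{\prime}_t = \sqrt{\beta_t}\,\boldsymbol{x}^{\prime}_{t-1} + \sqrt{1-\beta_t}\,\boldsymbol{\xi}_{t-1}$ with $\boldsymbol{\xi}_{t-1}\sim\mathcal{N}(\mathbf{0},\boldsymbol{I})$ independent across steps by the Markov assumption.

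For the base case $t=1$, note $c\cdot 1 = c$ and $c\cdot 1 - c = 0$, so with the usual convention $\alpha_0 = 1$ the transition collapses to $\mathcal{N}(\boldsymbol{x}^{\prime}_1; \sqrt{\alpha_c}\,\boldsymbol{x}^{\prime}_0, (1-\alpha_c)\boldsymbol{I})$, which matches the claimed form at $t=1$. For the inductive step I would assume $\boldsymbol{x}^{\prime}_{t-1} = \sqrt{\alpha_{c(t-1)}}\,\boldsymbol{x}^{\prime}_0 + \sqrt{1-\alpha_{c(t-1)}}\,\bar{\boldsymbol{\xi}}$ with $\bar{\boldsymbol{\xi}}\sim\mathcal{N}(\mathbf{0},\boldsymbol{I})$, and crucially observe that $c(t-1) = c\cdot t - c$, so the hypothesis is exactly stated in the indices appearing in $\beta_t$. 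Substituting into the one-step reparameterization yields a coefficient on $\boldsymbol{x}^{\prime}_0$ equal to $\sqrt{\beta_t}\sqrt{\alpha_{c\cdot t-c}} = \sqrt{(\alpha_{c\cdot t}/\alpha_{c\cdot t-c})\,\alpha_{c\cdot t-c}} = \sqrt{\alpha_{c\cdot t}}$.

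The heart of the argument is then merging the two remaining independent Gaussian terms $\sqrt{\beta_t}\sqrt{1-\alpha_{c\cdot t-c}}\,\bar{\boldsymbol{\xi}}$ and $\sqrt{1-\beta_t}\,\boldsymbol{\xi}_{t-1}$ into a single Gaussian whose variance is the sum of the two component variances: $\beta_t(1-\alpha_{c\cdot t-c}) + (1-\beta_t) = 1 - \beta_t\alpha_{c\cdot t-c} = 1 - \alpha_{c\cdot t}$, where the last equality again uses the telescoping cancellation $\beta_t\alpha_{c\cdot t-c} = \alpha_{c\cdot t}$. This gives $\boldsymbol{x}^{\prime}_t = \sqrt{\alpha_{c\cdot t}}\,\boldsymbol{x}^{\prime}_0 + \sqrt{1-\alpha_{c\cdot t}}\,\boldsymbol{\epsilon}$ with $\boldsymbol{\epsilon}\sim\mathcal{N}(\mathbf{0},\boldsymbol{I})$, i.e.\ the asserted marginal, completing the induction.

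I do not expect a genuine obstacle here, since the computation is a direct transcription of the classical DDPM result with the replacement $\alpha_t \mapsto \alpha_{c\cdot t}$ and $\alpha_{t-1} \mapsto \alpha_{c\cdot t-c}$. The only point requiring care is the index bookkeeping: one must verify the identity $c(t-1) = c\cdot t - c$ so that the ratio defining the student transition telescopes correctly against the induction hypothesis, and one must justify adding the variances, which relies on the independence of the injected noises guaranteed by the Markov factorization of $q^{\prime}$. Beyond that, the derivation is routine.
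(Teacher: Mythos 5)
Your proposal is correct and follows essentially the same route as the paper's proof: the paper also reparameterizes each transition of Eq.~(\ref{eq:qtt1s}), merges the independent Gaussian noise terms by adding variances, and telescopes the coefficients $\sqrt{\alpha_{c\cdot t}/\alpha_{c\cdot t-c}}$ down to $\boldsymbol{x}^{\prime}_0$; it merely presents this as an iterated unrolling rather than a formal induction. The index identity $c(t-1)=c\cdot t-c$ and the convention $\alpha_0=1$ that you flag are exactly the points the paper relies on implicitly.
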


\begin{proof} 
According to Eq.~(\ref{eq:qtt1s}), $\boldsymbol{x}^{\prime}_t$ can be reparameterized as
\begin{equation}
    \begin{aligned}
        \boldsymbol{x}^{\prime}_{t} &= \sqrt{\frac{\alpha_{c \cdot t}}{\alpha_{c \cdot t-c}}} \boldsymbol{x}^{\prime}_{t-1} + \sqrt{1-\frac{\alpha_{c \cdot t}}{\alpha_{c \cdot t-c}}} \boldsymbol{\epsilon}_{t-1} \qquad{;where} \quad \boldsymbol{\epsilon}_{t-1} \sim \mathcal{N}(\mathbf{0}, \boldsymbol{I})  \\
        &= \sqrt{\frac{\alpha_{c \cdot t}}{\alpha_{c \cdot t-c}}}  \left(\sqrt{\frac{\alpha_{c \cdot t -c}}{\alpha_{c \cdot t-2c}}} \boldsymbol{x}^{\prime}_{t-2} + \sqrt{1-\frac{\alpha_{c \cdot t -c}}{\alpha_{c \cdot t-2c}}} \boldsymbol{\epsilon}_{t-2}\right) + \sqrt{1-\frac{\alpha_{c \cdot t}}{\alpha_{c \cdot t-c}}} \boldsymbol{\epsilon}_{t-1} \qquad{;where} \quad \boldsymbol{\epsilon}_{t-2} \sim \mathcal{N}(\mathbf{0}, \boldsymbol{I}) \\
        &= \sqrt{\frac{\alpha_{c \cdot t}}{\alpha_{c \cdot t-2c}}} \boldsymbol{x}^{\prime}_{t-2} + \sqrt{\left(\sqrt{\frac{\alpha_{c \cdot t}}{\alpha_{c \cdot t-c}} -  \frac{\alpha_{c \cdot t}}{\alpha_{c \cdot t-2c}}  }\right)^{2}+ \left(\sqrt{1-\frac{\alpha_{c \cdot t}}{\alpha_{c \cdot t-c}}} \right)^{2}} \bar{\boldsymbol{\epsilon}}_{t-2} \qquad{;where} \quad \bar{\boldsymbol{\epsilon}}_{t-2} \quad {merges} \quad \boldsymbol{\epsilon}_{t-1}, \boldsymbol{\epsilon}_{t-2} \\
        &= \sqrt{\frac{\alpha_{c \cdot t}}{\alpha_{c \cdot t-2c}}} \boldsymbol{x}^{\prime}_{t-2} + \sqrt{1-\frac{\alpha_{c \cdot t}}{\alpha_{c \cdot t - 2c}}} \bar{\boldsymbol{\epsilon}}_{t-2} \\
        &= {...} \\
        &= \sqrt{\alpha_{c \cdot t}}\boldsymbol{x}^{\prime}_0 + \sqrt{1-\alpha_{c \cdot t}} \boldsymbol{\epsilon},
    \end{aligned}
    \label{eq:lemma1_app}
\end{equation}
where we apply the Gaussian property that two distributions $\mathcal{N}(\mathbf{0}, \sigma_{1}^2\boldsymbol{I})$ and $\mathcal{N}(\mathbf{0}, \sigma_{2}^2\boldsymbol{I})$ can be merged as one distribution $\mathcal{N}(\mathbf{0}, (\sigma_{1}^2+\sigma_{2}^2)\boldsymbol{I})$. Then according to Eq.~(\ref{eq:lemma1_app}), we have $ q^{\prime}(\boldsymbol{x}^{\prime}_t \mid \boldsymbol{x}^{\prime}_0) =  \mathcal{N}\left(\boldsymbol{x}^{\prime}_t ; \sqrt{\alpha_{c \cdot t}} \boldsymbol{x}^{\prime}_0,\left(1-\alpha_{c \cdot t}\right) \boldsymbol{I}\right)$.
\end{proof}

\subsection{Derivation of the posterior $q^{\prime}\left(\boldsymbol{x}^{\prime}_{t-1} \mid \boldsymbol{x}^{\prime}_t, \boldsymbol{x}^{\prime}_0\right)$}
\label{app:sec2}

In the following, we show the derivation of 
\begin{equation*}
\begin{aligned}
    q^{\prime}\left(\boldsymbol{x}^{\prime}_{t-1} \mid \boldsymbol{x}^{\prime}_t, \boldsymbol{x}^{\prime}_0\right) =  \mathcal{N}\left(\boldsymbol{x}^{\prime}_{t-1} ; \frac{(1-\alpha_{c \cdot t-c})\sqrt{\alpha_{c \cdot t}}}{(1-\alpha_{c \cdot t})\sqrt{\alpha_{c \cdot t-c}}}\boldsymbol{x}^{\prime}_t + \frac{\alpha_{c \cdot t-c}-\alpha_{c \cdot t}}{(1-\alpha_{c \cdot t})\sqrt{\alpha_{c \cdot t-c}}}\boldsymbol{x}^{\prime}_0, \sigma^{\prime}_t \boldsymbol{I}\right),
\end{aligned}
\end{equation*}
where $\sigma^{\prime}_t = \frac{(1-\alpha_{c \cdot t-c})(\alpha_{c \cdot t-c}-\alpha_{c \cdot t})}{(1-\alpha_{c \cdot t})\alpha_{c \cdot t-c}}$.

\begin{proof}
According to the Bayes' rule $q^{\prime}\left(\boldsymbol{x}^{\prime}_{t-1} \mid \boldsymbol{x}^{\prime}_t, \boldsymbol{x}^{\prime}_0\right)=q^{\prime}\left(\boldsymbol{x}^{\prime}_t \mid \boldsymbol{x}^{\prime}_{t-1}, \boldsymbol{x}^{\prime}_0\right) \frac{q^{\prime}\left(\boldsymbol{x}^{\prime}_{t-1} \mid \boldsymbol{x}^{\prime}_0\right)}{q^{\prime}\left(\boldsymbol{x}^{\prime}_t \mid \boldsymbol{x}^{\prime}_0\right)}$ and the Markovian assumption that $q^{\prime}\left(\boldsymbol{x}^{\prime}_t \mid \boldsymbol{x}^{\prime}_{t-1}, \boldsymbol{x}^{\prime}_0\right) = q^{\prime}\left(\boldsymbol{x}^{\prime}_t \mid \boldsymbol{x}^{\prime}_{t-1}\right)$, we have

\begin{equation*}
\begin{aligned}
    q^{\prime}&\left(\boldsymbol{x}^{\prime}_{t-1} \mid \boldsymbol{x}^{\prime}_t, \boldsymbol{x}^{\prime}_0\right) =q^{\prime}\left(\boldsymbol{x}^{\prime}_t \mid \boldsymbol{x}^{\prime}_{t-1}\right) \frac{q^{\prime}\left(\boldsymbol{x}^{\prime}_{t-1} \mid \boldsymbol{x}^{\prime}_0\right)}{q^{\prime}\left(\boldsymbol{x}^{\prime}_t \mid \boldsymbol{x}^{\prime}_0\right)} \\
    &\propto \exp \left(-\frac{1}{2}\left(\frac{\left(\boldsymbol{x}^{\prime}_{t}-\sqrt{\alpha_{c \cdot t}/\alpha_{c\cdot t-c}} \boldsymbol{x}^{\prime}_{t-1}\right)^2}{1-\frac{\alpha_{c\cdot t}}{\alpha_{c\cdot t-c}}}+\frac{\left(\boldsymbol{x}^{\prime}_{t-1}-\sqrt{\alpha_{c \cdot t-c}} \boldsymbol{x}^{\prime}_0\right)^2}{1-\alpha_{c \cdot t-c}}-\frac{\left(\boldsymbol{x}^{\prime}_t-\sqrt{\alpha_{c\cdot t}} \boldsymbol{x}^{\prime}_0\right)^2}{1-\alpha_{c\cdot t}}\right)\right) \\
    &= \exp \left( -\frac{1}{2}\left(\left(\frac{\alpha_{c \cdot t}}{\alpha_{c \cdot t-c}-\alpha_{c \cdot t}}+\frac{1}{1-\alpha_{c \cdot t-c}}\right) {\boldsymbol{x}^{\prime}_{t-1}}^2-\left(\frac{2 \sqrt{\alpha_{c \cdot t}{\alpha_{c \cdot t-c}}}}{\alpha_{c \cdot t-c}-\alpha_{c \cdot t}} \boldsymbol{x}^{\prime}_t+\frac{2 \sqrt{\alpha_{c \cdot t-c}}}{1-\alpha_{c \cdot t-c}} \boldsymbol{x}^{\prime}_0\right) \boldsymbol{x}^{\prime}_{t-1} + C\left(\boldsymbol{x}^{\prime}_t, \boldsymbol{x}^{\prime}_0\right) \right ) \right)
\end{aligned}
\end{equation*}

Letting $A = \frac{\alpha_{c \cdot t}}{\alpha_{c \cdot t-c}-\alpha_{c \cdot t}}+\frac{1}{1-\alpha_{c \cdot t-c}}$ and $B = \frac{2 \sqrt{\alpha_{c \cdot t}{\alpha_{c \cdot t-c}}}}{\alpha_{c \cdot t-c}-\alpha_{c \cdot t}} \boldsymbol{x}^{\prime}_t+\frac{2 \sqrt{\alpha_{c \cdot t-c}}}{1-\alpha_{c \cdot t-c}} \boldsymbol{x}^{\prime}_0$. Comparing the expression of a Gaussian distribution, we have the variance of $q^{\prime}\left(\boldsymbol{x}^{\prime}_{t-1} \mid \boldsymbol{x}^{\prime}_t, \boldsymbol{x}^{\prime}_0\right)$:
\begin{equation*}
    \sigma^{\prime}_t = \frac{1}{A} = \frac{(1-\alpha_{c \cdot t-c})(\alpha_{c \cdot t-c}-\alpha_{c \cdot t})}{(1-\alpha_{c \cdot t})\alpha_{c \cdot t-c}}.
\end{equation*}
Then, the mean of $q^{\prime}\left(\boldsymbol{x}^{\prime}_{t-1} \mid \boldsymbol{x}^{\prime}_t, \boldsymbol{x}^{\prime}_0\right)$ is
\begin{equation*}
    \frac{B}{A} = \frac{(1-\alpha_{c \cdot t-c})\sqrt{\alpha_{c \cdot t}}}{(1-\alpha_{c \cdot t})\sqrt{\alpha_{c \cdot t-c}}}\boldsymbol{x}^{\prime}_t + \frac{\alpha_{c \cdot t-c}-\alpha_{c \cdot t}}{(1-\alpha_{c \cdot t})\sqrt{\alpha_{c \cdot t-c}}}\boldsymbol{x}^{\prime}_0.
\end{equation*}

\end{proof}

\subsection{Simplifying the student loss}
\label{app:sec3}

Letting $\tilde{\boldsymbol{\mu}}_{\Theta}\left(\boldsymbol{x}_t^{\prime}, \boldsymbol{x}_0^{\prime}\right) = \frac{(1-\alpha_{c \cdot t-c})\sqrt{\alpha_{c \cdot t}}}{(1-\alpha_{c \cdot t})\sqrt{\alpha_{c \cdot t-c}}}\boldsymbol{x}^{\prime}_t + \frac{\alpha_{c \cdot t-c}-\alpha_{c \cdot t}}{(1-\alpha_{c \cdot t})\sqrt{\alpha_{c \cdot t-c}}}\boldsymbol{x}^{\prime}_0$, then we have $$q^{\prime}\left(\boldsymbol{x}^{\prime}_{t-1} \mid \boldsymbol{x}^{\prime}_t, \boldsymbol{x}^{\prime}_0\right) =  \mathcal{N}\left(\boldsymbol{x}^{\prime}_{t-1} ; \tilde{\boldsymbol{\mu}}_{\Theta}\left(\boldsymbol{x}_t^{\prime}, \boldsymbol{x}_0^{\prime}\right), \sigma^{\prime}_t \boldsymbol{I}\right).$$ According to the definition of the backward process of the student we know that $$p_{\Theta}^{\prime}\left(\boldsymbol{x}_{t-1}^{\prime} \mid \boldsymbol{x}_t^{\prime}\right)=\mathcal{N}\left(\boldsymbol{x}_{t-1}^{\prime} ; \boldsymbol{\mu}_{\Theta}^{\prime}\left(\boldsymbol{x}_t^{\prime}, t\right), \sigma_t^{\prime} \boldsymbol{I}\right)$$

Then, referring to Eq.~(\ref{eq:loss_sim}), we have
\begin{equation}
    \begin{aligned}
D_{\mathrm{KL}}\left(q^{\prime}\left(\boldsymbol{x}^{\prime}_{t-1} \mid \boldsymbol{x}^{\prime}_t, \boldsymbol{x}^{\prime}_0\right) \| p_{\Theta}^{\prime}\left(\boldsymbol{x}_{t-1}^{\prime} \mid \boldsymbol{x}_t^{\prime}\right)\right) & =D_{\mathrm{KL}}\left(\mathcal{N}\left(\boldsymbol{x}^{\prime}_{t-1} ; \tilde{\boldsymbol{\mu}}_{\Theta}\left(\boldsymbol{x}_t^{\prime}, \boldsymbol{x}_0^{\prime}\right), \sigma^{\prime}_t \boldsymbol{I}\right) \| \mathcal{N}\left(\boldsymbol{x}_{t-1}^{\prime} ; \boldsymbol{\mu}_{\Theta}^{\prime}\left(\boldsymbol{x}_t^{\prime}, t\right), \sigma_t^{\prime} \boldsymbol{I}\right)\right) \\
& =\frac{1}{2}\left(n+\frac{1}{\sigma_t^2}\left\|\tilde{\boldsymbol{\mu}}_{\Theta}\left(\boldsymbol{x}_t^{\prime}, \boldsymbol{x}_0^{\prime}\right)-       \boldsymbol{\mu}_{\Theta}^{\prime}\left(\boldsymbol{x}_t^{\prime}, t\right)\right\|^2-n+\log 1\right) \\
& =\frac{1}{2 \sigma_t^2}\left\|\tilde{\boldsymbol{\mu}}_{\Theta}\left(\boldsymbol{x}_t^{\prime}, \boldsymbol{x}_0^{\prime}\right)-       \boldsymbol{\mu}_{\Theta}^{\prime}\left(\boldsymbol{x}_t^{\prime}, t\right)\right\|^2.
\end{aligned}
\label{eq:kl_appendix}
\end{equation}
The calculation of the KL-divergence between two Gaussian distributions is referred to \cite{bishop2006pattern}. Then plugging Eq.~(\ref{eq:revers_mean}) and Eq.~(\ref{eq:predictor}) into Eq.~(\ref{eq:kl_appendix}), we have the simplified loss

\begin{equation*}
    L(\Theta):= \sum_{t=1}^{T} \mathbb{E}_{\boldsymbol{x}^{\prime}_0, \boldsymbol{\epsilon}^{\prime}_t}\left[\gamma^{\prime}_t\left\|\boldsymbol{\epsilon}^{\prime}_t-\boldsymbol{\epsilon}_\Theta \left(\sqrt{\alpha_{c\cdot t}} \boldsymbol{x}^{\prime}_0+\sqrt{1-\alpha_{c \cdot t}} \boldsymbol{\epsilon}^{\prime}_t, t\right)\right\|^2\right].
\end{equation*}

\section{Flexible sub-sequence}
\label{app:sec4}

When we train the student by any given subset $\{\boldsymbol{x}_{\phi_0},...,\boldsymbol{x}_{\phi_{T^{\prime}}}\}$ of the teacher where $\phi$ is an increasing sub-sequence of $\{0,...,T\}$, $\phi_0 = 0$ and $\phi_{T^{\prime}} = T$, we take the following form of $q^{\prime}\left(\boldsymbol{x}^{\prime}_t \mid \boldsymbol{x}^{\prime}_{t-1}\right)$ in the forward process of the student:

\begin{equation*}
\footnotesize
q^{\prime}\left(\boldsymbol{x}^{\prime}_t \mid \boldsymbol{x}^{\prime}_{t-1}\right):=\mathcal{N}\left(\boldsymbol{x}^{\prime}_t; \sqrt{\frac{\alpha_{\phi_t}}{\alpha_{\phi_{t-1}}}} \boldsymbol{x}^{\prime}_{t-1},\left(1-\frac{\alpha_{\phi_t}}{\alpha_{\phi_{t-1}}}\right) \boldsymbol{I}\right).
\end{equation*}

Then we have the corresponding property that

\begin{equation*}
q^{\prime}\left(\boldsymbol{x}_t^{\prime} \mid \boldsymbol{x}_0^{\prime}\right)=\mathcal{N}\left(\boldsymbol{x}_t^{\prime} ; \sqrt{\alpha_{\phi_t}} \boldsymbol{x}_0^{\prime},\left(1-\alpha_{\phi_t}\right) \boldsymbol{I}\right),
\end{equation*}
which ensures that
\begin{equation*}
q^{\prime}(\boldsymbol{x}^{\prime}_t = \boldsymbol{x}_{\phi_t} \mid \boldsymbol{x}^{\prime}_0 = \boldsymbol{x}_0) = q(\boldsymbol{x}_{\phi_t}|\boldsymbol{x}_0).
\end{equation*}

The remainder derivation of the student loss of this case is similar with the special $\{\boldsymbol{x}_{c\cdot t}\}$ case detailedly introduced in this paper. The final simplified training loss for this general case is 

\begin{equation*}
\begin{aligned}
    L(\Theta):= \sum_{t=1}^{T} \mathbb{E}_{\boldsymbol{x}^{\prime}_0, \boldsymbol{\epsilon}^{\prime}_t}[\|\boldsymbol{\epsilon}_\theta\left(\sqrt{\alpha_{\phi_t}} \boldsymbol{x}^{\prime}_0+\sqrt{1-\alpha_{\phi_t}} \boldsymbol{\epsilon}^{\prime}_{t}, \phi_t\right) 
    -\boldsymbol{\epsilon}_\Theta \left(\sqrt{\alpha_{\phi_t}} \boldsymbol{x}^{\prime}_0+\sqrt{1-\alpha_{\phi_t}} \boldsymbol{\epsilon}^{\prime}_t, t\right)\|^2].
\end{aligned}
\end{equation*}

\section{\alg on Tabular data generation: 2D Swiss Roll}

\label{app_sec:tabular}

We distill a teacher model with 500 steps into a student with 50 steps using our \alg. We implement \alg on the diffusion model setting shown in a public repository~\footnote{https://github.com/joseph-nagel/diffusion-demo/blob/main/notebooks/swissroll.ipynb}. On this tabular DDPM, the forward process turns Swiss Roll-like points into randomly distributed 2D points. Contrarily, the reverse process constructs a Swiss Roll distribution according to randomly distributed points. The results visualized in Figure~\ref{fig:swiss2d} and Figure~\ref{fig:2dswiss222} demonstrate that our algorithm works on DDPM for distilling the generation of tabular data. 

\begin{figure}[ht]
	\centering
	{
	\subfloat[Teacher diffusion forward process of 500 steps on 2D Swiss Roll dataset.]{
	    \label{fig:swiss2d}
	    \includegraphics[width=0.8\textwidth]{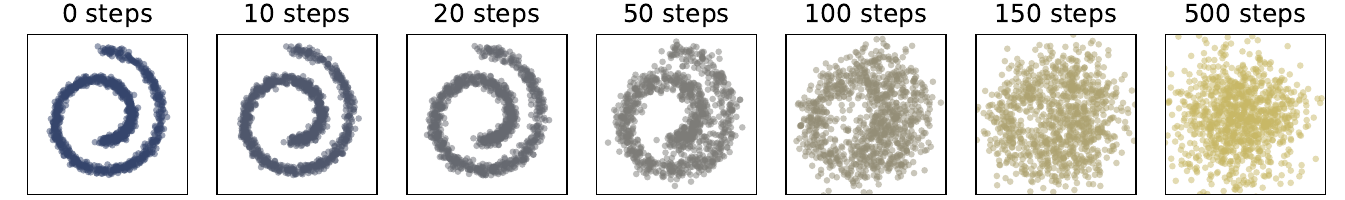} }
     \hfill
     \subfloat[Student diffusion forward process of 50 steps on 2D Swiss Roll dataset.]{
	    \label{fig:vgan-mnist}
	    \includegraphics[width=0.8\textwidth]{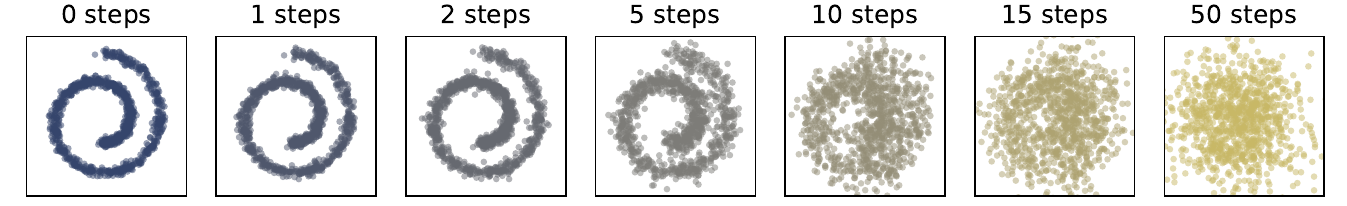} }
     \hfill
     \subfloat[Teacher diffusion reverse process of 500 steps on 2D Swiss Roll dataset.]{
	    \label{fig:vgan-mnist}
	    \includegraphics[width=0.8\textwidth]{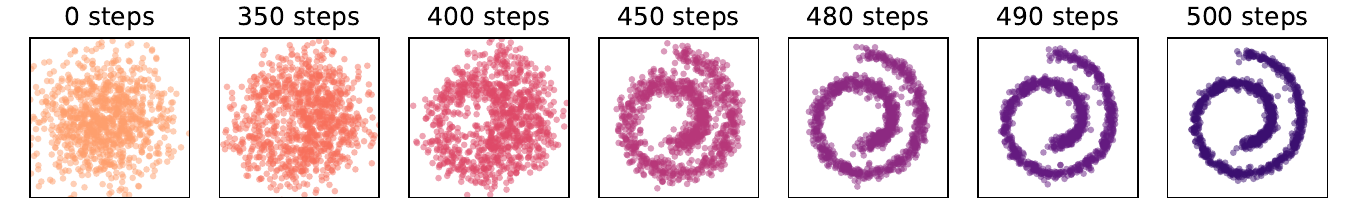} }
     \hfill
     \subfloat[Student diffusion reverse process of 50 steps on 2D Swiss Roll dataset.]{
	    \label{fig:vgan-mnist}
	    \includegraphics[width=0.8\textwidth]{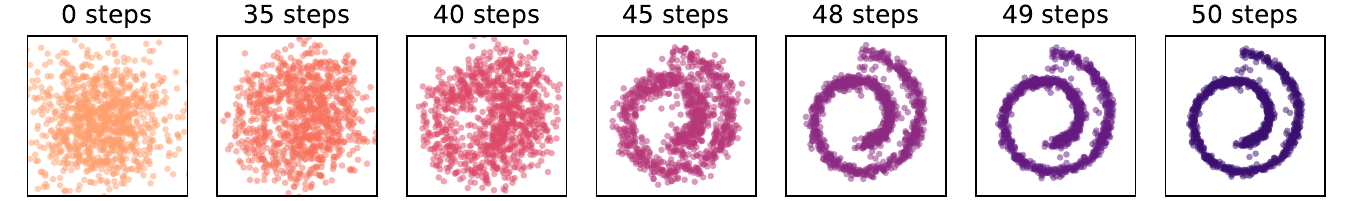} }
	}
	\caption{Forward and reverse process of teacher and \alg student model on 2D Swiss Roll.}
        \label{fig:fid-slayers-cgan}

\vspace{-1.5em}
\end{figure}

\begin{figure}[h!]
	\centering
	{
	\subfloat[Sampled data by the teacher model.]{
	    \label{fig:vgan-mnist}
	    \includegraphics[width=0.4\textwidth]{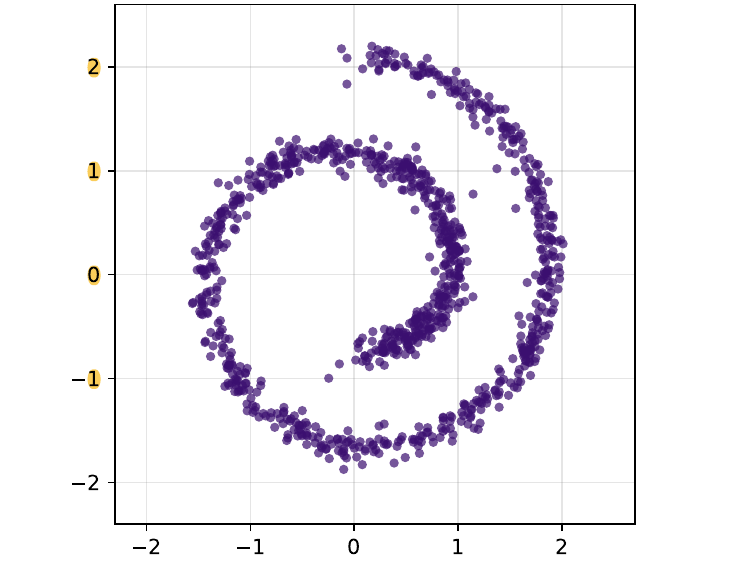} }
     \hspace{-2em}
     \subfloat[Sampled data by the \alg student model.]{
	    \label{fig:vgan-mnist}
	    \includegraphics[width=0.4\textwidth]{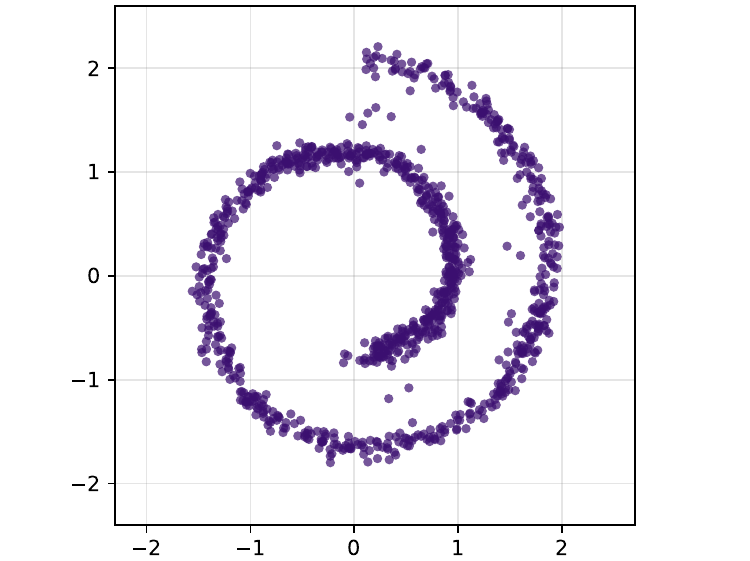} }
	}
	\caption{Generative performance of the teacher and \alg student model on 2D Swiss Roll dataset.}
        \label{fig:2dswiss222}

\end{figure}

\section{Experimental details}
\label{app:exp}
Our evaluation is carried out by Alienware-Aurora-R13 with Ubuntu 20.04. The machine is equipped with 64G memory, $4 \times$ GeForce RTX 3090 GPU and 16-core Intel i9 CPU. Each of the 8 P-cores has two threads, hence each machine contains 24 logical CPU cores in total. 
We consider various image generation benchmarks (CIFAR-10, CelebA, LSUN-Bedroom, LSUN-Church), with resolution varying from $32 \times 32$ to $128 \times 128$. All experiments for the teacher diffusion model use the sigmoid schedule, particularly good for large images, following the settings of ~\cite{sigmoid:conf/icml/JabriFC23} and all models use a UNet architecture same as DDPM~\cite{ddpm:ho2020denoising}.
Our schedule of the student model is defined in Sec.~\ref{subsec:forward} accordingly.  
Our training setup closely matches the open source code by DDPM.
For the training of the student model, we choose Adam optimizer with learning rate fixed to $2 \times 10 ^{-5}$ while other hyper-parameters remain the same as the default setting of PyTorch Adam. An interesting observation on \alg is that experimentally using $l_1$ norm on $L_{\Theta}$ leads to faster convergence comparing to $l_2$ norm. FID scores are computed across 10K images.

For the training of our student model, referring to Alg.~\ref{alg:training}, each loop from Line 3 to Line 8 is regarded as one step for a full batch. We provide the number of steps on LSUN-Bedroom dataset (batch size 40) that one student model arrives converge to demonstrate the efficiency intuitively in Tab.~\ref{tab:converge}. \alg achieves better FID with few training steps comparing to progressive distillation.

\begin{table}
\centering
\caption{The number of steps to arrive convergence of training the student model}  \label{tab:converge}
  \begin{tabular}{c|c|c}
    \toprule
    Method & Progressive & \alg\\
    \midrule
    Steps & 0.23M & 0.094M \\

  \bottomrule
\end{tabular}
\end{table}

\begin{table*}[h!]
\renewcommand\arraystretch{1.2}
\centering
\caption{FID of the teacher $T=1024$ on four datasets.}
  \label{tab:fid_teacher}
\resizebox{0.8\columnwidth}{!}{%
\begin{tabular}{c|c|c|c|c} 
\toprule
\textbf{Dataset}& \textbf{Cifar-10} & \textbf{CelebA-HQ}&
\textbf{LSUN-Bedroom} & \textbf{LSUN-Church} \\
\midrule
\textit{Teacher} & 2.49 & 4.05 & 2.34 & 2.25\\

\bottomrule
\end{tabular}
}
\end{table*}

\section{Discussion}

We discuss the characteristics of different distillation methods. Progressive distillation trains a student model by a progressive halving manner so as to reduce the gap between the teacher and the student in each halving. However, this specific binary distillation way limits its scope of application, e.g., the number of steps of the teacher should be a power of 2. Consistency model is a new type of diffusion models and it supports single-step generation at its intrinsic design. But the distilled student is no longer the same model type as the teacher. \alg is proposed for distilling any DDPM-like teacher models (including image DDPM and tabular DDPM, etc) while the distilled student model is still a DDPM with the same model type as the teacher. Besides, on the same sampling inputs, \alg student has consistent outputs and interpolation with the teacher because the student Markov chains are simplified copies of the teacher's Markov chains.


\section{Additional results}

\begin{figure*}
	\centering
	{
    \includegraphics[width=0.8\textwidth]{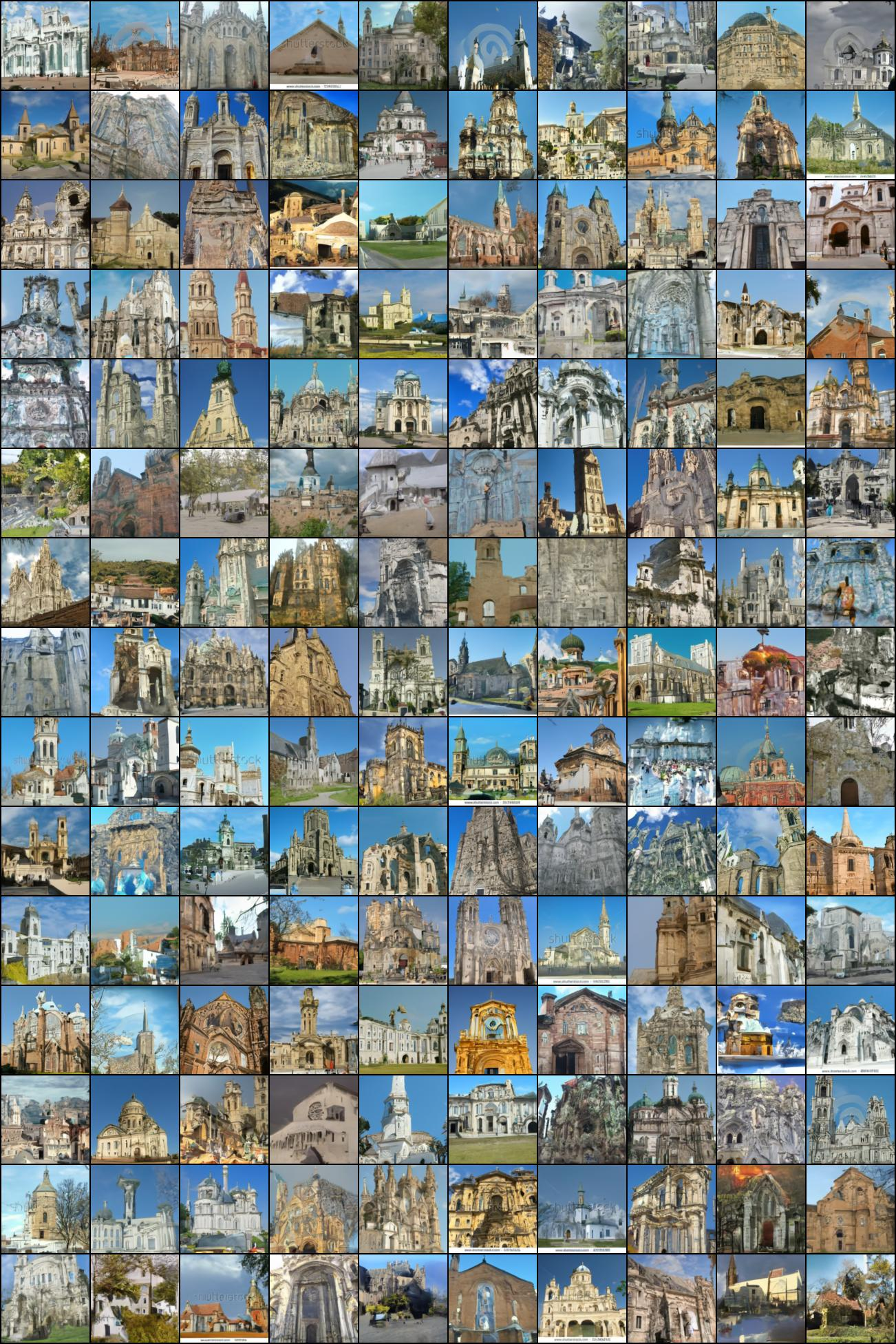} }
	\caption{Sampling image by our \alg student model on LSUN-Church dataset with 100 steps.}
\end{figure*}

\begin{figure*}
	\centering
	{
    \includegraphics[width=0.8\textwidth]{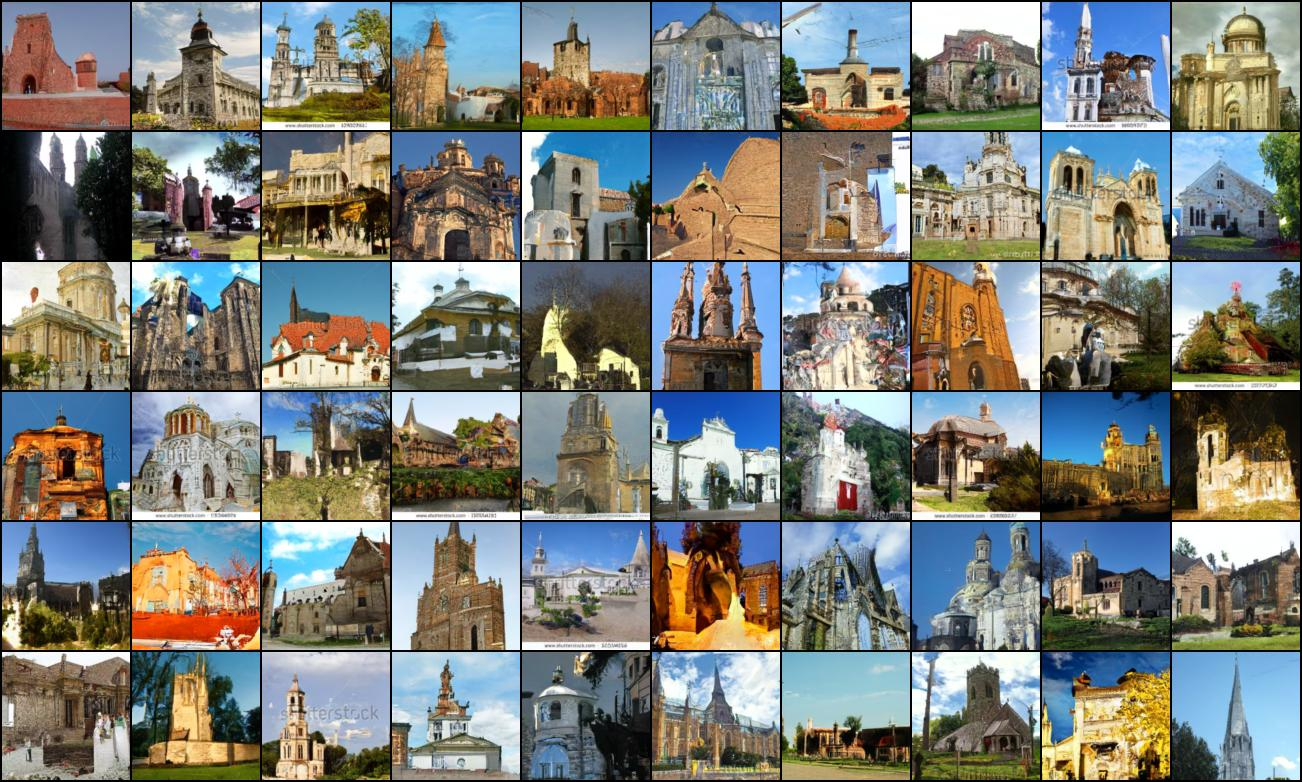} }
	\caption{Sampling image by the teacher with 1024 steps on LSUN-Church dataset}
\end{figure*}

\begin{figure*}
	\centering
	{
    \includegraphics[width=0.8\textwidth]{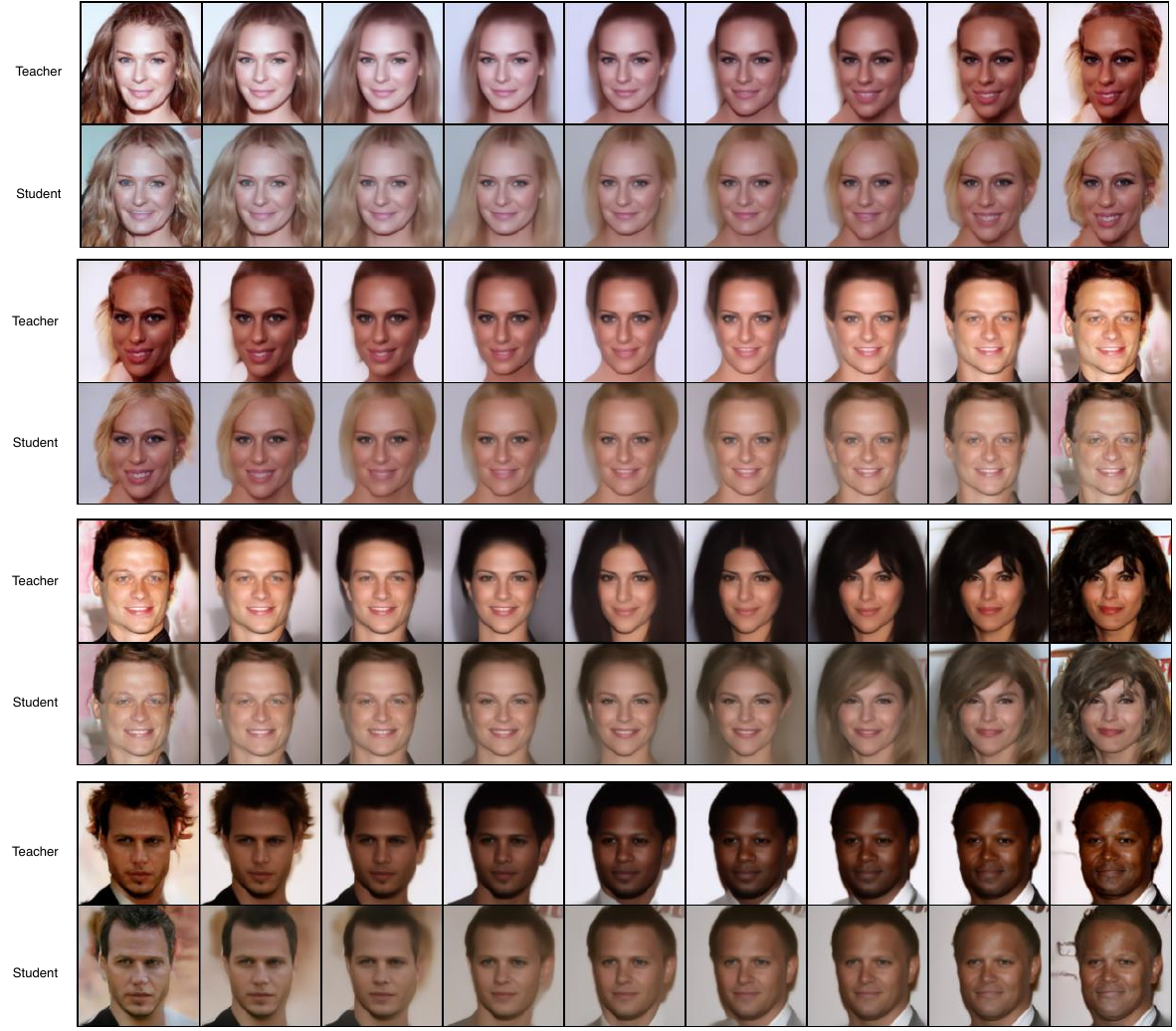} }
	\caption{Interpolation by our \alg student model compared with teacher diffusion model on CelebA dataset with 100 steps out of 1024 steps.}
        \label{fig:church}
\end{figure*}
\clearpage

\end{document}